\documentclass{article} 
\usepackage{iclr2026_conference,times}


\usepackage{amsmath,amsfonts,bm}









\def\eqref#1{equation~\ref{#1}}









\def\1{\bm{1}}










\DeclareMathAlphabet{\mathsfit}{\encodingdefault}{\sfdefault}{m}{sl}
\SetMathAlphabet{\mathsfit}{bold}{\encodingdefault}{\sfdefault}{bx}{n}













\iclrfinalcopy

\usepackage[hidelinks]{hyperref}
\usepackage{url}
\usepackage{graphicx}
\usepackage{tabularx}
\usepackage{amsmath}
\usepackage{amssymb}
\usepackage{algorithm}
\usepackage{algorithmic}
\usepackage{xcolor}
\usepackage{booktabs}
\usepackage{multirow}
\usepackage{pifont}
\usepackage[table]{xcolor}
\usepackage{colortbl}
\usepackage{color}
\usepackage{bbding}
\usepackage{multirow}
\usepackage{courier}
\usepackage{pifont}
\usepackage{bbm}
\usepackage{dsfont}

\definecolor{skyblue}{RGB}{241, 246, 247}
\definecolor{lightgreen}{RGB}{244, 249, 243}
\definecolor{grey}{RGB}{242, 242, 242}
\definecolor{modelblue}{HTML}{646464}
\usepackage{subcaption}
\usepackage{tcolorbox}
\usepackage{mathtools}
\usepackage{amsthm}

\newcommand{\titlebox}[1]{%
  \tikz[baseline=(M.base)]{%
    \node[
      draw=modelblue,
      fill=modelblue!8,
      rounded corners=2pt,
      inner sep=1.5pt,
      font=\sffamily\footnotesize,
      baseline
    ] (M) {#1};\unskip
  }%
}
\newtheorem{theorem}{Theorem}[section]

\newtheorem{lemma}[theorem]{Lemma}
\newtheorem{corollary}[theorem]{Corollary}
\theoremstyle{definition}

\theoremstyle{remark}

\usepackage{minitoc}

\title{Large Language Model Compression with Global
Rank and Sparsity Optimization}


\author{
{\bf Changhai Zhou}$^1$, {\bf Qian Qiao}$^2$, {\bf Yuhua Zhou}$^3$, {\bf Yuxin Wu}$^4$, \\
{\bf Shichao Weng}$^1$, {\bf Weizhong Zhang}$^1$\thanks{Corresponding author.}, {\bf Cheng Jin}$^{1*}$ \\
  $^1$Fudan University, $^2$Soul AILab, OpenWPLab \\
  $^3$Zhejiang University, $^4$Renmin University of China \\
  \texttt{\{zhouch23,scweng23\}@m.fudan.edu.cn}, \\ \texttt{\{weizhongzhang,jc\}@fudan.edu.cn,joeqian@aliyun.com}, \\ \texttt{zhouyuhua@zju.edu.cn, yuxinwu2333@gmail.com}
}
%

\begin{document}
\doparttoc 
\faketableofcontents 

\maketitle

\begin{abstract}
Low-rank and sparse composite approximation is a natural idea to compress Large Language Models (LLMs). However, such an idea faces two primary challenges that adversely affect the performance of existing methods. The first challenge relates to the interaction and cooperation between low-rank and sparse matrices, while the second involves determining weight allocation across different layers, as redundancy varies considerably among them. To address these challenges, we propose a novel two-stage LLM compression method with the capability of global resource allocation for rank and sparsity. It is noteworthy that the optimization space is vast, making comprehensive optimization computationally prohibitive. Therefore, to reduce the optimization space, our first stage utilizes robust principal component analysis to decompose the weight matrices of LLMs into low-rank and sparse components, which span the low dimensional and sparse spaces containing the resultant low-rank and sparse matrices, respectively. In the second stage, we propose a probabilistic global allocation strategy to jointly identify the low-rank and sparse structures within the above two spaces. The appealing feature of our approach is its ability to automatically detect the redundancy across different layers and to manage the interaction between the sparse and low-rank components. Extensive experimental results indicate that our method significantly surpasses state-of-the-art techniques for sparsification and composite approximation.
\end{abstract}
\section{Introduction}
Transformer-based large language models (LLMs)~\citep{vaswani2023attentionneed,touvron2023llama2openfoundation,openai2024gpt4technicalreport} have achieved remarkable progress across natural language processing (NLP), computer vision, and scientific applications. Despite these successes, their massive parameter sizes pose critical challenges: they demand huge storage and memory footprints, incur slow inference, and require substantial computational resources for training. Consequently, model compression~\citep{cheng2020surveymodelcompressionacceleration,wang2024modelcompressionefficientinference,zhu2024surveymodelcompressionlarge,zhou2025bslora,zhou2026lara} has become an essential line of research for enabling real-world LLM deployment under stringent hardware constraints.

Among compression strategies, \textit{quantization}~\citep{han2015deep,chee2023quip,kuzmin2023pruning} 
typically retains overall model structure by reducing the precision of weights, thus often preserving performance. By contrast, \textit{pruning}~\citep{liu2017learning,frankle2018the,sun2024simpleeffectivepruningapproach,frantar2023sparsegpt} 
removes individual weights based on certain criteria (e.g., magnitude or importance scores). Although pruning is flexible and can yield substantial parameter savings, it may degrade performance unless combined with additional fine-tuning or distillation~\citep{sanh2020movementpruningadaptivesparsity, zhou2024rankadaptor, qpruner}, especially in large-scale LLMs that encode extensive linguistic and factual knowledge~\citep{geva2021transformerfeedforwardlayerskeyvalue,dai2022knowledgeneuronspretrainedtransformers, cui2025enhancingtargetunspecifictasksfeatures}.

To retain more critical information under aggressive compression, researchers have explored ``low-rank plus sparse" decompositions~\citep{li2023losparse,ren2023low,han2024sltrain}. In this approach, the weight matrix is decomposed into a low-rank part that captures global correlations and a sparse part that highlights outliers or domain-specific knowledge. However, existing methods often rely on manually set singular-value thresholds, which can inadvertently discard medium-sized yet important singular values. Additionally, these methods typically require computationally expensive backpropagation for parameter updates. While there is some interaction between the optimization of the low-rank and sparse components, the two parts are still relatively independent in their update processes. Lastly, due to the significant redundancy variations from early layers to deeper ones, how to allocate rank and sparsity across layers in a globally coordinated manner remains unclear.

In this paper, we address these issues via a novel \textbf{two-stage compression} framework tailored to LLMs. First, we apply robust principal component analysis (RPCA)~\citep{candes2011robust} to factor each weight matrix into strictly low-rank and sparse components, thereby reducing the otherwise huge search space into a low-dimensional subspace and a sparse subspace. Second, we introduce a probabilistic global resource allocation scheme that jointly determines which singular values in the low-rank component and which nonzero entries in the sparse component should be retained. This is done by assigning Bernoulli probabilities and updating them via policy gradient~\citep{williams1992simple} on a small calibration set, avoiding heuristic thresholds or backpropagation on the original LLM parameters. Critically, our method automatically detects the differing redundancy levels across layers and manages the interaction between low-rank and sparse parts, ensuring that vital parameters are kept while redundant ones are pruned away. We summarize our main contributions as follows:
\begin{itemize}
    \item We propose a two-stage LLM compression approach that first uses RPCA to produce low-rank and sparse subspaces, then employs a Bernoulli-based global resource allocation for rank and sparsity selection.
    \item Our framework eliminates the need for manual thresholds or layerwise iterative backpropagation, offering a training-free scheme that adapts automatically to various layers’ redundancy characteristics.
    \item Extensive experiments show that our method outperforms existing sparsification and composite approximation baselines under multiple compression ratios, highlighting its effectiveness and robustness.
\end{itemize}

We provide a detailed review and discussion of \titlebox{related work} in Appendix~\ref{appendix:rw}.
\section{Method}
\label{sec:method}

\subsection{Theoretical Background and Motivation} 
Low-rank approximation is a fundamental technique in matrix theory, widely used to reduce the parameter count in neural networks while preserving model performance. In LLMs, weight matrices are typically high-dimensional and dense. By approximating a weight matrix $W \in \mathbb{R}^{m\times n}$ with rank $R \ll \min(m,n)$ using a truncated SVD, one can write 
\begin{equation}
    W \approx U_R \Sigma_R V_R^\top,
    \label{eq:lowrank}
\end{equation}
where $U_R$ and $V_R$ contain the top $R$ left and right singular vectors, and $\Sigma_R$ is the diagonal matrix of the largest $R$ singular values. This factorization reduces the parameter count from $m \times n$ to $(m+n)\times R$, and breaks a large matrix multiplication into smaller ones, leading to significant efficiency gains.

Despite these benefits, low-rank approximation alone may be insufficient for LLM compression, especially when the singular values do not decay sharply. For example, Figure~\ref{fig:singular_values} in Appendix~\ref{sec:background} shows the singular value spectra of two representative layers (Layer 0 and Layer 31) from a Transformer model, comparing the original weight matrix and its low-rank component after RPCA processing. The dashed lines (original matrices) indicate that certain modules in the same Transformer block (e.g., an attention head vs.\ a feed-forward network) can exhibit similar spectral shapes; yet across different layers, the redundancy patterns vary considerably. Consequently, imposing the same target rank $R$ across all layers may prune too aggressively in some cases and insufficiently in others. This observation motivates a more flexible approach that can adapt the compression ratio per layer.

Recent studies have explored combining low-rank and sparse representations to enhance compression. For instance, LoSparse~\citep{li2023losparse} first applies SVD on $W$ to obtain a rank-$R$ approximation, then prunes the residual $W - U_R \Sigma_R V_R^\top$ to form a sparse matrix. In practice, one must still decide the singular value cutoff (or target rank) and the sparsity ratio for the residual. Often, additional fine-tuning is performed on the low-rank part to recover lost performance, or iterative pruning is applied to the sparse part~\citep{molchanov2019ITP}, which can be computationally expensive. A major limitation of these approaches is the reliance on manually chosen thresholds for both singular values and residual pruning. They also lack a clear mechanism to coordinate how much rank vs.\ sparsity each layer should receive, since different layers and modules may have different redundancy characteristics. Furthermore, when both the low-rank and sparse matrices require joint fine-tuning, the memory consumption can become large, potentially exceeding the budget.

We begin by formulating the global resource allocation objective of compressing LLM weights under a parameter budget (\S~\ref{subsec:problem_formulation}). We then describe our proposed approach (\S~\ref{subsec:proposed_approach}), which first uses RPCA to decompose each weight matrix into low-rank and sparse components, and subsequently prunes these components in a probabilistic manner, without heuristic thresholds or backpropagation through the original LLM weights. Additional theoretical analysis can be found in the Appendix~\ref{app:theory}.

\begin{figure*}[t]
\vspace{-1em}
    \centering
    \includegraphics[width=1.0\linewidth]{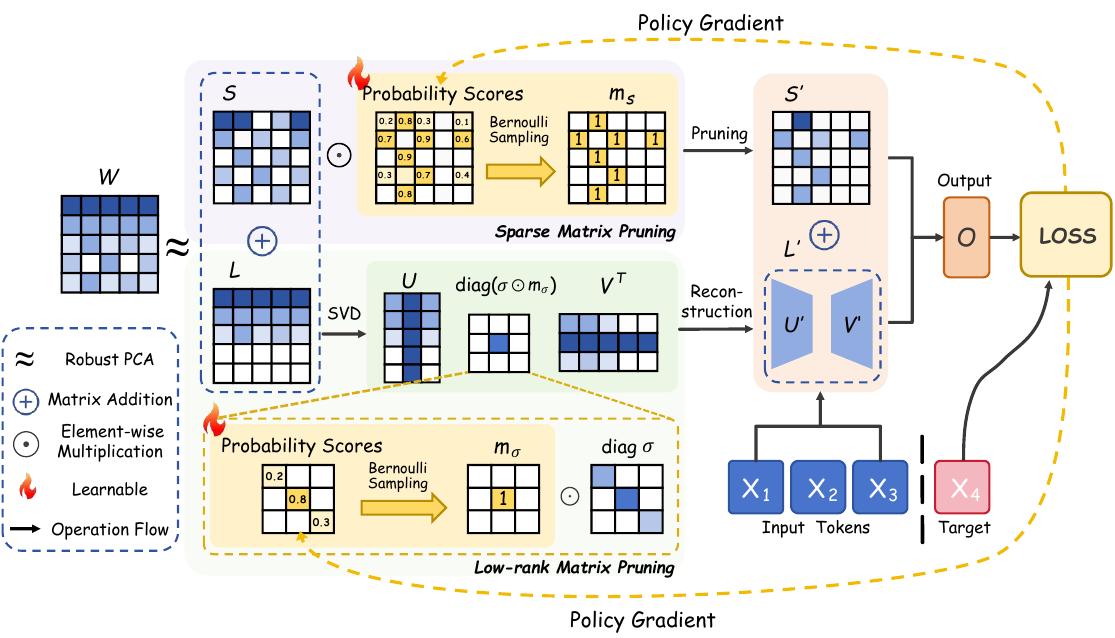}
    \caption{Overview of our proposed compression method. The weight matrix $\mathbf{W}$ is decomposed into a low-rank component $\mathbf{L}$ and a sparse component $\mathbf{S}$ using RPCA. Both components are pruned through Bernoulli sampling guided by learned probability scores, optimized via policy gradient. The low-rank component is further factorized into $\mathbf{U}'$ and $\mathbf{V}'$ to reduce the number of model parameters.}
    \label{fig:method_overview}
\end{figure*}

\subsection{Problem Formulation}
\label{subsec:problem_formulation}
Suppose we have $L$ layers in an LLM, each containing weight matrices $\{\mathbf{W}^{(l)}\}_{l=1}^L$. We seek compressed matrices $\{\tilde{\mathbf{W}}^{(l)}\}$ such that the total parameter count does not exceed a budget $K$, while minimizing a loss $\ell(\tilde{\mathbf{W}})$ measured on a small calibration set $\mathcal{D}$. Formally,
\begin{equation}
\begin{aligned}
&\min_{\{\tilde{\mathbf{W}}^{(l)}\}}
\quad
\sum_{(x,y)\in\mathcal{D}}
\ell\Bigl(f(\tilde{\mathbf{W}}; x),\,y\Bigr), \\[4pt]
&\text{subject to}
\quad
\mathrm{ParamCount}\bigl(\{\tilde{\mathbf{W}}^{(l)}\}\bigr)\;\le\; K,
\end{aligned}
\label{eq:global_objective}
\end{equation}
where $f(\tilde{\mathbf{W}}; x)$ is the LLM’s forward pass given the compressed weights, and \(\mathrm{ParamCount}(\cdot)\) measures how many parameters are retained. Directly pruning each individual weight is intractable for very large matrices. To address this, we propose to:
\begin{itemize}
    \item \textbf{Decompose} each $\mathbf{W}^{(l)}$ via RPCA to obtain a low-rank matrix $\mathbf{L}$ and a sparse matrix $\mathbf{S}$, reducing the search space to “global rank directions” plus “sparse outliers.”
    \item \textbf{Probabilistically prune} both components under the budget $K$ by learning Bernoulli retention probabilities through policy gradient on a small calibration set.
\end{itemize}

\subsection{Proposed Approach: CAP}
\label{subsec:proposed_approach}
As illustrated in Figure~\ref{fig:method_overview}, our proposed method, CAP, follows a two-stage process. The role of Stage 1 is to decompose weights into a relatively low-rank matrix $\mathbf{L}$ and a sparse matrix $\mathbf{S}$, reducing the parameter space to manageable candidates. Stage 2 then allocates the parameter budget over these candidates to achieve the target compression ratio while preserving model performance. This principled decomposition followed by budget-aware selection avoids heuristic thresholds and expensive fine-tuning. In the following sections, we provide detailed explanations of our algorithm.

\subsubsection{Stage 1: Principled Decomposition via RPCA}
The first stage of our method is not designed to achieve a target compression ratio directly. Instead, its purpose is to perform a principled decomposition of each weight matrix, transforming the complex problem of pruning individual weights into a more structured one. By separating a weight matrix $\mathbf{W} \in \mathbb{R}^{m\times n}$ into a low-rank component $\mathbf{L}$ that captures global structure and a sparse component $\mathbf{S}$ that captures local, salient features, we establish a high-quality candidate pool for subsequent compression. We achieve this through RPCA, which formulates the decomposition as a convex optimization problem:
\begin{equation}
    \min_{\mathbf{L}, \mathbf{S}}
    \underbrace{\|\mathbf{L}\|_*}_{\text{Low-rank constraint}} 
    + \lambda \, \underbrace{\|\mathbf{S}\|_1}_{\text{Sparsity constraint}}
    \text{subject to}
    \mathbf{W} = \mathbf{L} + \mathbf{S}.
    \label{eq:rpca_optimization}
\end{equation}
The choice of this objective is theoretically motivated. The nuclear norm $\|\mathbf{L}\|_*$ is the tightest convex relaxation of the rank function, making it the most effective convex proxy for minimizing rank. Similarly, the $\ell_1$ norm $\|\mathbf{S}\|_1$ is the standard convex relaxation for the non-convex $\ell_0$ norm (sparsity), which effectively identifies significant, sparse outliers. Thus, this framework provides a principled and globally optimal separation of $\mathbf{W}$ into its underlying low-rank and sparse structures.

Crucially, the hyperparameter $\lambda$ in the RPCA objective governs the nature of this decomposition, not the final compression rate. Attempting to control sparsity by simply tuning $\lambda$ leads to unpredictable changes in the rank of $\mathbf{L}$ and often results in poor-quality decompositions, a point we analyze in detail in Appendix~\ref{sec:l1_pruning_analysis}. Therefore, this stage focuses solely on creating an optimal candidate pool for the subsequent budget-aware pruning. We solve Eq.~\eqref{eq:rpca_optimization} using the efficient Alternating Direction Method of Multipliers (ADMM)~\citep{lin2010augmented}. The updates are as follows:
\begin{equation}
    \mathbf{L}_{k+1} = \arg\min_{\mathbf{L}} \ \|\mathbf{L}\|_* + \frac{\mu}{2} \left\| \mathbf{W} - \mathbf{L} - \mathbf{S}_k + \mu^{-1} \mathbf{Y}_k \right\|_F^2,
    \label{eq:update_L}
\end{equation}
\begin{equation}
    \mathbf{S}_{k+1} = \arg\min_{\mathbf{S}} \ \lambda \|\mathbf{S}\|_1 + \frac{\mu}{2} \left\| \mathbf{W} - \mathbf{L}_{k+1} - \mathbf{S} + \mu^{-1} \mathbf{Y}_k \right\|_F^2,
    \label{eq:update_S}
\end{equation}
\begin{equation}
    \mathbf{Y}_{k+1} = \mathbf{Y}_k + \mu \left( \mathbf{W} - \mathbf{L}_{k+1} - \mathbf{S}_{k+1} \right).
    \label{eq:update_Y}
\end{equation}
The $\mathbf{L}$-update employs Singular Value Thresholding (SVT)~\citep{cai2008singular}:
\begin{equation}
    \mathbf{L}_{k+1} = \mathbf{U}\operatorname{diag}(\operatorname{shrink}_{\mu^{-1}}(\boldsymbol{\sigma}))\mathbf{V}^\top
    \label{eq:svt}
\end{equation}
where $\mathbf{U}\boldsymbol{\sigma}\mathbf{V}^\top$ is the SVD of $\mathbf{W} - \mathbf{S}_k + \mu^{-1}\mathbf{Y}_k$, with singular value shrinkage $\operatorname{shrink}_\tau(\sigma_i) = \max(\sigma_i - \tau, 0)$. The $\mathbf{S}$-update applies elementwise soft-thresholding:
\begin{equation}
    [\mathbf{S}_{k+1}]_{ij} = \operatorname{shrink}_{\lambda\mu^{-1}}([\mathbf{W} - \mathbf{L}_{k+1} + \mu^{-1}\mathbf{Y}_k]_{ij})
    \label{eq:shrink_S}
\end{equation}
This alternating optimization progressively separates the weight matrix into a low-dimensional subspace capturing directional patterns ($\mathbf{L}$) and a sparse subspace containing localized refinements ($\mathbf{S}$), establishing the foundation for subsequent global resource allocation.

\subsubsection{Stage 2: Learnable Probabilistic Pruning}
While the RPCA decomposition in Stage 1 provides a high-quality separation of components, it does not enforce a specific parameter budget. The second stage directly addresses this by performing a global, budget-aware selection from the candidate pools ($\mathbf{L}$ and $\mathbf{S}$) generated previously. We decide which rank-1 components in $\mathbf{L}$ and which non-zero entries in $\mathbf{S}$ to keep, to meet a user-defined parameter budget $K$ while minimizing task performance degradation.

The total parameter budget, $K$, is a user-defined hyperparameter (e.g., 50\% of the original model's parameters). Each retained singular value $\sigma_i$ from $\mathbf{L}$ requires storing its corresponding singular vectors $\mathbf{u}_i\in\mathbb{R}^m$ and $\mathbf{v}_i\in\mathbb{R}^n$, contributing $(m+n)$ parameters. Each retained non-zero entry of $\mathbf{S}$ contributes one parameter. We introduce Bernoulli random variables to model the retention decision for each potential parameter:
\[
m_{\sigma_i}\;\sim\;\mathrm{Bernoulli}(s_{\sigma_i}), 
\quad
m_{S_{ij}}\;\sim\;\mathrm{Bernoulli}(s_{S_{ij}}),
\]
where \(s_{\sigma_i}\in[0,1]\) and \(s_{S_{ij}}\in[0,1]\) are learned retention probabilities. The compressed matrix is then
\begin{equation}
    \tilde{\mathbf{W}} 
    \;=\;
    \mathbf{U}\,\mathrm{diag}\!\bigl(\boldsymbol{\sigma}\odot \mathbf{m}_\sigma\bigr)\,\mathbf{V}^\top
    \;+\;
    \mathbf{S}\,\odot\,\mathbf{m}_S,
    \label{eq:compressed_weight}
\end{equation}
subject to 
\(\,\sum_i s_{\sigma_i}(m+n) \;+\;\sum_{i,j} s_{S_{ij}} \,\le\, K\,\) to respect the total parameter budget.

\paragraph{Learning probabilities via policy gradient.}
We minimize the expected loss on a small calibration set \(\mathcal{D}\):
\begin{equation}
    \min_{\mathbf{s}} \ 
    \mathbb{E}_{\mathbf{m}\sim p(\mathbf{m}\mid \mathbf{s})}
    \Bigl[\,
        \mathcal{L}(\tilde{\mathbf{W}})\,
    \Bigr],
    \label{eq:expected_loss}
\end{equation}
where \(\mathbf{s} = \{s_{\sigma_i}, s_{S_{ij}}\}\) and \(p(\mathbf{m}\mid \mathbf{s})\) is the product of Bernoulli distributions. We employ a REINFORCE-style~\citep{williams1992simple} policy gradient:
\begin{equation}
    \nabla_{s_k}\,\mathbb{E}_{\mathbf{m}}[\mathcal{L}(\tilde{\mathbf{W}})]
    \;=\;
    \mathbb{E}_{\mathbf{m}}
    \Bigl[
        \mathcal{L}(\tilde{\mathbf{W}})
        \;\nabla_{s_k}\log p(\mathbf{m}\mid s_k)
    \Bigr].
    \label{eq:reinforce_gradient}
\end{equation}
For a Bernoulli variable \(m_k\sim\mathrm{Bernoulli}(s_k)\),
\[
    \nabla_{s_k}\,\log p(m_k\mid s_k)
    \;=\;
    \frac{m_k \;-\; s_k}{\,s_k\,(1-s_k)\;+\;\epsilon\,},
\]
with a small \(\epsilon>0\) to avoid division by zero. To reduce variance, we maintain a moving average baseline \(\delta\)~\citep{zhao2011analysis}:
\begin{equation}
    \delta \;\leftarrow\; \beta \,\delta\;+\;(1-\beta)\,\mathcal{L}(\tilde{\mathbf{W}}),
    \label{eq:baseline_update}
\end{equation}
and update each \(s_k\) via
\begin{equation}
    s_k \;\leftarrow\;
    s_k 
    \;-\;
    \eta \,\bigl(\,\mathcal{L}(\tilde{\mathbf{W}})-\delta\bigr)\,
    \nabla_{s_k}\,\log p(m_k\mid s_k).
    \label{eq:s_update}
\end{equation}
After each gradient step, we project \(\mathbf{s}\) back onto \(\{\mathbf{s}:\mathbf{1}^\top \mathbf{s}\le K,\;0\le s_k\le 1\}\).

\paragraph{Thresholding masks and final factorization.}
The policy gradient optimization yields a set of probabilities $\{s_k\}$ that reflect the learned importance of each parameter for minimizing the task loss. To obtain the final compressed model that strictly adheres to the budget $K$, we perform a deterministic selection. We treat the probabilities $s_k$ as importance scores for their corresponding parameters (singular values or sparse entries). All potential parameters are ranked globally according to these scores. We select the top-$K$ parameters to keep, generating the final binary masks $m_k$:
\begin{equation}
    m_k 
    \;=\;
    \begin{cases}
        1, & \text{if parameter } k \text{ is among the top-}K \text{ scored parameters,}\\
        0, & \text{otherwise}.
    \end{cases}
\end{equation}
We note that comparing $s_k$ across different parameter types is valid because the learned probability acts as a unified proxy for the ``utility-to-cost'' ratio. During optimization, the policy gradient inherently accounts for the parameter's contribution to loss reduction relative to its existence in the model. Thus, sorting by $s_k$ provides a globally consistent ranking for resource allocation.

This final step ensures the parameter budget is met precisely. The compressed weight matrix is reconstructed using these binary masks in Eq.~\eqref{eq:compressed_weight}. To enhance efficiency, the resulting low-rank component is factorized into smaller matrices. The compressed $\mathbf{U}'$ and $\mathbf{V}'$ are computed as:
\begin{align}
    \mathbf{U}' &= \left[ \sqrt{\sigma_1} \mathbf{u}_1, \ \sqrt{\sigma_2} \mathbf{u}_2, \ \dots, \ \sqrt{\sigma_{r'}} \mathbf{u}_{r'} \right], \label{eq:U_compressed} \\
    \mathbf{V}' &= \left[ \sqrt{\sigma_1} \mathbf{v}_1, \ \sqrt{\sigma_2} \mathbf{v}_2, \ \dots, \ \sqrt{\sigma_{r'}} \mathbf{v}_{r'} \right], \label{eq:V_compressed}
\end{align}
where $r'$ is the number of retained singular values (i.e., where $m_{\sigma_i} = 1$). The final compressed weight matrix is then:
\begin{equation}
    \tilde{\mathbf{W}} = \mathbf{U}' \left( \mathbf{V}' \right)^\top + \mathbf{S} \odot \mathbf{m}_S.
    \label{eq:final_compressed_weight}
\end{equation}
This factorization reduces both storage and computational cost during inference.

\subsection{Discussion}
We propose CAP, a two-stage compression framework for large language models. Stage 1: RPCA Decomposition—The weight matrix is split into low-rank and sparse parts, preserving global structure while isolating local anomalies and sharply reducing the search space for later optimization. This step is cast as a convex program (nuclear norm + $L_1$ norm), guaranteeing a globally optimal separation for the decomposition objective. Stage 2: Bernoulli Mask Optimization—Using a small calibration set, an unbiased policy-gradient method learns the retention probabilities for the low-rank and sparse components, automatically detecting and pruning redundancy across layers. 

\textbf{Remark.} While Stage 1 solves a convex problem, Stage 2 addresses the discrete budget allocation problem via policy gradient, which serves as a heuristic optimizer. It does not guarantee a global optimum for the non-convex pruning objective but empirically finds effective allocation strategies by leveraging the high-quality subspaces identified in Stage 1. This two-step design balances theoretical principledness in decomposition with practical flexibility in resource allocation.
\begin{table*}[t]
\centering
\caption{Performance comparison with unstructured pruning methods at 50\% compression. We report average zero-shot accuracy (\%) across eight tasks and WikiText perplexity (lower is better).}
\label{tab:main_unstructured}
\resizebox{\textwidth}{!}{
\begin{tabular}{l|c|cccc|cccc}
\toprule
\multirow{2}{*}{\textbf{Method}} & \multirow{2}{*}{\textbf{Compression}} & \multicolumn{4}{c|}{\textbf{Zero-shot Accuracy (\%)}} & \multicolumn{4}{c}{\textbf{WikiText Perplexity}} \\
& & \textbf{Phi-3 Mini} & \textbf{Phi-3 Medium} & \textbf{LLaMA-3 8B} & \textbf{LLaMA-3 70B} & \textbf{Phi-3 Mini} & \textbf{Phi-3 Medium} & \textbf{LLaMA-3 8B} & \textbf{LLaMA-3 70B} \\
\midrule
Dense & 0\% & 72.85 & 75.37 & 70.79 & 76.53 & 9.42 & 6.11 & 8.56 & 2.68 \\
\midrule
\multicolumn{10}{c}{\textit{Uniform Sparsity Methods}} \\
\midrule
\multirow{3}{*}{SparseGPT} & 30\% & 70.63 & 74.53 & 69.08 & 75.07 & 11.19 & 7.48 & 9.71 & 3.24 \\
& 40\% & 69.18 & 74.40 & 67.58 & 74.63 & 13.03 & 8.52 & 10.01 & 3.99 \\
& 50\% & 66.36 & 73.25 & 64.66 & 73.17 & 16.80 & 9.89 & 11.95 & 5.27 \\
\midrule
\multirow{3}{*}{Wanda} & 30\% & 70.66 & 74.05 & 68.63 & 75.19 & 10.71 & 7.28 & 9.39 & 3.28 \\
& 40\% & 68.80 & 73.01 & 67.04 & 74.10 & 12.59 & 8.49 & 9.74 & 4.08 \\
& 50\% & 65.03 & 70.96 & 63.27 & 72.85 & 17.23 & 10.12 & 12.36 & 5.38 \\
\midrule
\multirow{3}{*}{DSNoT} & 30\% & 71.20 & 74.03 & 68.98 & 75.54 & 10.51 & 7.11 & 9.36 & 3.27 \\
& 40\% & 69.08 & 72.90 & 66.65 & 74.29 & 12.17 & 8.24 & 9.60 & 4.10 \\
& 50\% & 65.33 & 71.12 & 62.74 & 72.91 & 16.68 & 9.96 & 12.41 & 5.58 \\
\midrule
\multirow{3}{*}{OATS} & 30\% & 71.48 & 74.04 & 69.34 & 75.24 & 10.27 & 6.85 & 9.59 & 3.07 \\
& 40\% & 70.04 & 74.46 & 68.68 & 74.88 & 11.53 & 7.70 & 9.24 & 3.68 \\
& 50\% & 68.41 & 73.39 & 65.71 & 73.30 & 15.18 & 9.05 & 10.87 & 4.78 \\
\midrule
\multicolumn{10}{c}{\textit{Layerwise Allocation Methods (Based on Wanda)}} \\
\midrule
\multirow{3}{*}{OWL} & 30\% & 71.15 & 74.28 & 69.12 & 75.45 & 10.45 & 7.15 & 9.25 & 3.18 \\
& 40\% & 69.32 & 73.35 & 67.58 & 74.42 & 12.28 & 8.32 & 9.58 & 3.95 \\
& 50\% & 65.78 & 71.38 & 63.95 & 73.25 & 16.85 & 9.88 & 12.18 & 5.25 \\
\midrule
\multirow{3}{*}{AlphaPruning} & 30\% & 71.28 & 74.35 & 69.25 & 75.52 & 10.38 & 7.08 & 9.18 & 3.15 \\
& 40\% & 69.45 & 73.48 & 67.72 & 74.55 & 12.15 & 8.25 & 9.48 & 3.88 \\
& 50\% & 65.95 & 71.52 & 64.12 & 73.42 & 16.72 & 9.78 & 12.05 & 5.18 \\
\midrule
\multicolumn{10}{c}{\textit{Our Method}} \\
\midrule
\multirow{3}{*}{\textbf{CAP}} & 30\% & \textbf{72.15} & \textbf{74.85} & \textbf{70.25} & \textbf{76.02} & \textbf{9.88} & \textbf{6.58} & \textbf{9.05} & \textbf{2.95} \\
& 40\% & \textbf{70.58} & \textbf{74.78} & \textbf{69.38} & \textbf{75.45} & \textbf{11.15} & \textbf{7.42} & \textbf{9.16} & \textbf{3.52} \\
& 50\% & \textbf{69.12} & \textbf{74.05} & \textbf{66.85} & \textbf{74.18} & \textbf{14.68} & \textbf{8.78} & \textbf{10.35} & \textbf{4.45} \\
\bottomrule
\end{tabular}
}
\vspace{-10pt}
\end{table*}

\section{Experiments}
\label{sec:experiments}

In this section, we first introduce the experimental setup. Subsequently, we present the main experimental results, extensions to modern instruction-tuned models, and detailed ablation studies. We further provide in-depth analyses on inference throughput and calibration robustness. Due to space constraints, detailed results for Llama-1/2 are provided in Appendix \ref{sec:llama12_results}, and thorough analyses of throughput performance and computational resource consumption of our proposed \textbf{CAP} method are presented in Appendix \ref{app:throughput}.

\paragraph{Models and Evaluation.}
We evaluate our proposed CAP method on a comprehensive set of widely adopted large language models across different architectures and scales. Our evaluation includes the LLaMA family: LLaMA-1~\citep{touvron2023llama} (7B, 13B, 30B), LLaMA-2~\citep{touvron2023llama2openfoundation} (7B, 13B), and LLaMA-3~\citep{dubey2024llama} (8B, 70B); Modern Instruction-Tuned models: LLaMA-3.1-8B and Qwen2.5-7B~\citep{yang2024qwen25}; the OPT series~\citep{zhang2022opt} (1.3B, 2.7B, 6.7B, 13B); the Phi-3 family~\citep{abdin2024phi3technicalreporthighly} including Phi-3 Mini (3.8B) and Phi-3 Medium (14B); and BERT-base~\citep{devlin2019bert}. To assess the performance of the compressed models, we conduct experiments on zero-shot tasks and language modeling. We perform an extensive evaluation of the zero-shot capabilities of pruned models across eight standard commonsense benchmark datasets: GLUE~\citep{wang2018glue}, PIQA~\citep{bisk2020piqa}, BoolQ~\citep{clark2019boolq}, HellaSwag~\citep{zellers2019hellaswag}, WinoGrande~\citep{sakaguchi2021winogrande}, OpenBookQA~\citep{mihaylov2018can}, and the ARC Easy and ARC Challenge tasks~\citep{clark2018think}. For language modeling evaluation, we measure perplexity on the held-out WikiText~\citep{merity2016pointer} validation set (lm-eval-harness). For modern models, we additionally evaluate Chain-of-Thought reasoning (GSM8K \citep{bai2025longbenchv2}) and LongBench-v2 \citep{cobbe2021training}.

\paragraph{Implementation Details.}
We utilize PyTorch 2.3.0, Transformers 4.28.0, CUDA 12.1 on NVIDIA A100 GPUs under Ubuntu. To ensure fair comparison, we use 128 sequences with context length sampled from the C4 training set~\citep{raffel2020exploring} as calibration data. For policy gradient estimation, we set iterations to 3, sliding window size to 5, and learning rate to 0.05. The $\lambda$ parameter for RPCA decomposition is set according to the established formulation $\lambda = 1 / \sqrt{\max(m, n)}$, where $m$ and $n$ represent the dimensions of the data matrix. Further ablation studies on the $\lambda$ setting can be found in Appendix~\ref{sec:l1_pruning_analysis}.

\paragraph{Baselines.}
We compare our approach with several compression techniques:
\textbf{SparseGPT}~\citep{frantar2023sparsegpt} is a second-order pruning method for LLMs that solves a layer-wise reconstruction problem. 
\textbf{WANDA}~\citep{sun2024simpleeffectivepruningapproach} prunes weights based on their estimated importance using activation statistics. 
\textbf{OATS}~\citep{zhangoats} performs optimal sparsity allocation across transformer layers using second-order information.
\textbf{OWL}~\citep{yin2023outlier} and \textbf{AlphaPruning}~\citep{lu2024alphapruning} are layer-wise allocation methods that optimize sparsity distribution.
\textbf{SLiM}~\citep{mozaffari2024slim} combines low-rank approximation with sparsity and quantization, featuring probabilistic quantization error fitting.
\textbf{LPAF}~\citep{ren2023low} first applies first-order unstructured pruning to obtain a low-rank sparse model. Then, sparsity-aware SVD is used to decompose the sparse matrices into a low-rank form \( \mathbf{A} \mathbf{B} \). Finally, mixed-rank fine-tuning is used to retrain \( \mathbf{A} \mathbf{B} \). We also compare against dense SVD-based methods \textbf{SVD-LLM2}~\citep{wang2025svdllmv2}, \textbf{Dobi-SVD}~\citep{wang2025dobisvd}, \textbf{Basis Sharing}~\citep{wang2025basissharing} and structured pruning method \textbf{LoSparse}~\citep{li2023losparse}.

\subsection{Comparison with Unstructured Pruning Methods}
\label{subsec:Comparison with Unstructured Pruning Methods}
We compare CAP with recent unstructured pruning methods across multiple large language models. Table~\ref{tab:main_unstructured} presents a comprehensive comparison including both uniform sparsity methods (SparseGPT, Wanda, DSNoT, OATS) and layerwise allocation methods (OWL, AlphaPruning) at 30\%, 40\%, and 50\% compression ratios. Note that OWL and AlphaPruning are layerwise allocation methods that optimize sparsity distribution across layers, and we implement them using Wanda as the base pruning method for fair comparison. CAP consistently achieves competitive or superior performance across different model architectures and sizes.

\subsection{Extension to Modern Instruction-Tuned Models}
\label{subsec:modern_llms}
To validate CAP's effectiveness on the latest generation of models, we evaluated \textbf{LLaMA-3.1-8B-Instruct} and \textbf{Qwen2.5-7B} at 50\% sparsity. We report on Chain-of-Thought reasoning (GSM8K) and long-context understanding (LongBench-v2), where standard pruning methods struggle.
\begin{table}[t]
\centering
\vspace{-15pt}

\caption{Performance on Modern LLMs (50\% Sparsity). Left: LLaMA-3.1-8B-Instruct on reasoning and long-context tasks. Right: Qwen2.5-7B on diverse benchmarks.}
\label{tab:modern_llms}
\resizebox{\textwidth}{!}{
\begin{tabular}{l|ccc||l|cccc}
\toprule
\multicolumn{4}{c||}{\textbf{LLaMA-3.1-8B-Instruct}} & \multicolumn{5}{c}{\textbf{Qwen2.5-7B}} \\
\midrule
\multirow{2}{*}{\textbf{Method}} & \textbf{GSM8K} & \textbf{LongBench-v2} & \textbf{WikiText} & \multirow{2}{*}{\textbf{Task}} & \textbf{SparseGPT} & \textbf{Wanda} & \textbf{CAP} & \textbf{Dense} \\
& (8-shot, \%) & (Avg, \%) & (PPL) & & (50\%) & (50\%) & \textbf{(50\%)} & (0\%) \\
\midrule
Dense & 84.5 & 30.4 & 6.01 & NarrativeQA & 4.95 & 16.30 & \textbf{18.52} & 31.9 \\
Wanda (50\%) & 45.6 & 25.1 & 7.26 & GovReport & 33.52 & 32.13 & \textbf{34.05} & 53.4 \\
\textbf{CAP (50\%)} & \textbf{56.8} & \textbf{27.2} & \textbf{6.61} & Lcc (Code) & 39.70 & 45.14 & \textbf{46.92} & 58.1 \\
\textit{Improvement} & \textit{+11.2} & \textit{+2.1} & \textit{-0.65} & TriviaQA & 89.35 & 88.70 & \textbf{89.85} & 92.5 \\
\bottomrule
\end{tabular}
}
\vspace{-1em}
\end{table}
As shown in Table~\ref{tab:modern_llms}, CAP significantly outperforms Wanda on challenging tasks. For LLaMA-3.1-Instruct, CAP recovers \textbf{+11.2\%} accuracy on GSM8K, suggesting that the preserved low-rank backbone is crucial for maintaining the precise reasoning circuits disrupted by unstructured pruning.

\subsection{Comparison with Joint Compression Methods}
\label{subsec:joint_compression}
Since methods like LoSparse are based on structured pruning and require extensive retraining, we compare CAP with SLiM, a state-of-the-art method that jointly applies quantization, sparsity, and low-rank approximation. We also include comparisons with other joint compression approaches including JSQ~\citep{guo2024compressing}, a joint sparsity and quantization method that optimizes sparsity and quantization parameters simultaneously, and L2QER~\citep{zhang2024lqer}, which combines low-rank decomposition, quantization, and sparsity in a sequential manner.

While both SLiM and CAP structurally combine low-rank and sparse components, their technical approaches differ fundamentally: SLiM primarily focuses on using low-rank decomposition to fit quantization errors through probabilistic reformulation and numerical integration to find optimal quantization parameters, whereas CAP focuses on the synergy between low-rank and sparse decomposition through RPCA, where the low-rank component emerges from joint optimization rather than serving as an error fitting tool. Table~\ref{tab:slim_comparison} presents the comparison on representative models at 50\% unstructured sparsity. The results demonstrate that CAP consistently outperforms existing joint compression methods across different model sizes and architectures.

\begin{table*}[t]
\centering
\caption{Comparison at 50\% unstructured sparsity. Zero-shot accuracy (\%) on representative models. LoRA variants: Naive-LoRA uses basic error compensation; SLiM-LoRA incorporates weight salience; SLiM-LoRAQ additionally quantizes the adapter.}
\label{tab:slim_comparison}
\small
\begin{tabular}{lc|cccc|cc}
\toprule
\multirow{2}{*}{\textbf{Method}} & \multirow{2}{*}{\textbf{Quantization}} & \multicolumn{4}{c|}{\textbf{OPT}} & \multicolumn{2}{c}{\textbf{LLaMA-2}} \\
& & \textbf{1.3B} & \textbf{2.7B} & \textbf{6.7B} & \textbf{13B} & \textbf{7B} & \textbf{13B} \\
\midrule
Dense & - & 43.4 & 45.5 & 48.3 & 48.7 & 56.6 & 60.8 \\
\midrule
Magnitude & Group AbsMax & 32.1 & 39.9 & 36.4 & 32.3 & 47.0 & 51.0 \\
SparseGPT & OPTQ & 38.7 & 43.4 & 47.0 & 47.4 & 51.1 & 55.9 \\
Wanda & OPTQ & 41.0 & 42.9 & 46.5 & 46.8 & 53.6 & 56.8 \\
JSQ & JSQ & 38.9 & 35.5 & 42.8 & 30.7 & 52.3 & 57.0 \\
L2QER & Group AbsMax & 38.4 & 41.3 & 45.1 & OOM & 50.6 & OOM \\
\midrule
Naive-LoRA & QuantizationW & 40.4 & 43.4 & 46.6 & 47.3 & 51.5 & 55.3 \\
SLiM-LoRA & QuantizationW & \textbf{41.9} & 43.5 & 47.1 & 48.0 & 54.3 & 57.9 \\
SLiM-LoRAQ & QuantizationW & 41.7 & 43.6 & 47.2 & 47.9 & 54.2 & 57.3 \\
\midrule
\textbf{CAP (Ours)} & OPTQ & 41.7 & \textbf{44.8} & \textbf{48.2} & \textbf{48.3} & \textbf{55.1} & \textbf{59.2} \\
\bottomrule
\end{tabular}
\vspace{-10pt}
\end{table*}

\subsection{Comprehensive Comparison on GLUE Tasks}
We evaluate CAP on downstream tasks using the GLUE benchmark with BERT-base. Table~\ref{tab:glue-compression} compares CAP against various compression paradigms including pre-training distillation, task-specific distillation, structured pruning, and matrix factorization methods.

\begin{table*}[t]
\centering
\caption{Results on GLUE tasks under different parameter budgets. We show accuracy (\%) for RTE, MRPC, SST-2, QNLI, MNLI and F1 score (\%) for QQP.}
\label{tab:glue-compression}
\resizebox{\textwidth}{!}{
\begin{tabular}{l|ccc|ccc|ccc|ccc|ccc|ccc}
\toprule
\multirow{2}{*}{\textbf{Method}} & \multicolumn{3}{c|}{\textbf{RTE}} & \multicolumn{3}{c|}{\textbf{MRPC}} & \multicolumn{3}{c|}{\textbf{SST-2}} & \multicolumn{3}{c|}{\textbf{QQP}} & \multicolumn{3}{c|}{\textbf{QNLI}} & \multicolumn{3}{c}{\textbf{MNLI}} \\
& 50\% & 25\% & 16\% & 50\% & 25\% & 16\% & 50\% & 25\% & 16\% & 50\% & 25\% & 16\% & 50\% & 25\% & 16\% & 50\% & 25\% & 16\% \\
\midrule
\multicolumn{19}{c}{\textbf{Pre-training Distillation}} \\
\midrule
DistilBERT & 65.0 & 61.0 & 56.3 & 85.8 & 77.0 & 72.5 & 90.0 & 88.9 & 86.4 & 90.8 & 89.4 & 88.0 & 86.0 & 83.8 & 81.6 & 81.7 & 76.4 & 71.3\\
TinyBERT & 67.7 & 67.2 & 64.6 & 86.3 & 85.3 & 78.2 & 92.3 & 89.8 & 88.0 & 90.5 & 90.0 & 88.7 & 89.9 & 87.7 & 84.5 & 83.1 & 80.6 & 77.4\\
\midrule
\multicolumn{19}{c}{\textbf{Task-specific Distillation}} \\
\midrule
PKD & 65.5 & 59.2 & 53.8 & 81.9 & 76.2 & 71.3 & 91.3 & 88.1 & 87.2 & 88.4 & 88.5 & 87.5 & 88.4 & 82.7 & 78.0 & 81.3 & 75.7 & 72.7\\
Theseus & 65.6 & 62.1 & 58.8 & 86.2 & 77.2 & 72.8 & 91.5 & 88.6 & 86.1 & 90.9 & 89.6 & 89.0 & 88.2 & 83.2 & 78.0 & 82.3 & 76.4 & 73.5\\
CKD & 67.3 & 66.5 & 60.8 & 86.0 & 81.1 & 76.6 & 91.2 & 90.0 & 88.7 & 90.5 & 88.7 & 89.5 & 90.4 & 86.4 & 81.9 & 83.5 & 79.0 & 76.8\\
MetaDistill & 69.0 & 66.7 & 61.0 & 86.8 & 81.8 & 77.3 & 92.3 & 88.9 & 87.0 & 91.0 & 88.9 & 86.9 & 90.4 & 86.8 & 84.9 & 83.5 & 79.5 & 76.8\\
\midrule
\multicolumn{19}{c}{\textbf{Structured Pruning}} \\
\midrule
ISP & 66.4 & 65.0 & 63.9 & 86.1 & 83.6 & 82.8 & 90.4 & 89.4 & 89.9 & 90.5 & 88.7 & 87.2 & 90.5 & 88.7 & 87.2 & 83.2 & 81.9 & 80.8\\
FLOP & 66.1 & 58.5 & 56.0 & 82.1 & 80.1 & 78.4 & 89.7 & 89.1 & 87.9 & 91.4 & 89.9 & 89.7 & 90.5 & 88.5 & 87.1 & 82.6 & 79.9 & 79.0\\
BPhybrid & 66.4 & 64.3 & 63.9 & 84.1 & 81.1 & 78.3 & 91.0 & 88.7 & 86.9 & 91.8 & 89.3 & 89.1 & 90.7 & 88.1 & 86.2 & 83.0 & 80.1 & 78.0\\
CoFi & 69.0 & 66.4 & 66.4 & 84.6 & 84.3 & 83.4 & 91.6 & 89.7 & 89.2 & 90.1 & 89.0 & 88.9 & 90.2 & 88.8 & 87.6 & 83.5 & 80.8 & 80.5\\
\midrule
\multicolumn{19}{c}{\textbf{Matrix Factorization}} \\
\midrule
SVD$_\text{ft}$ & 62.1 & 60.3 & 55.6 & 79.9 & 77.0 & 70.1 & 89.4 & 86.9 & 85.3 & 90.0 & 87.9 & 87.1 & 90.1 & 83.8 & 80.9 & 81.8 & 78.0 & 74.6\\
LPAF & 62.8 & 68.0 & 67.9 & 86.8 & 85.5 & 86.0 & 92.0 & 90.0 & 91.5 & 90.4 & 90.1 & 91.1 & 89.3 & 88.6 & 84.8 & 84.8 & 82.6 & 77.6\\
\midrule
\multicolumn{19}{c}{\textbf{Low-rank plus Sparse}} \\
\midrule
\textbf{CAP (Ours)} & \textbf{69.1} & 67.8 & 66.5 & 86.2 & \textbf{86.2} & 85.8 & \textbf{92.3} & \textbf{91.9} & 90.8 & \textbf{91.9} & 90.8 & 90.5 & \textbf{90.8} & \textbf{89.1} & \textbf{88.8} & \textbf{85.1} & \textbf{83.1} & \textbf{82.8}\\ 
\midrule
\textbf{BERT-base} & \multicolumn{3}{c|}{69.2} & \multicolumn{3}{c|}{86.4} & \multicolumn{3}{c|}{92.7} & \multicolumn{3}{c|}{91.5} & \multicolumn{3}{c|}{91.4} & \multicolumn{3}{c}{84.6}\\
\bottomrule
\end{tabular}
}
\end{table*}

CAP achieves competitive or superior performance across most GLUE tasks and compression ratios. Notably, CAP consistently outperforms methods without fine-tuning and achieves comparable results to fine-tuned methods like LPAF while using only the RPCA decomposition without additional task-specific fine-tuning.

\subsection{Comparison with Structured and Pure Low-Rank Methods}
\label{subsec:lowrank_comparison}

We additionally compare CAP against (1) pure low-rank decomposition methods, including \textbf{SVD-LLM v2}~\citep{wang2025svdllmv2}, \textbf{Dobi-SVD}~\citep{wang2025dobisvd}, and \textbf{Basis Sharing}~\citep{wang2025basissharing}, which utilize dense or shared SVD structures; and (2) \textbf{LoSparse}~\citep{li2023losparse}, a representative structured method requiring iterative fine-tuning.

\begin{table}[h]
\centering
\caption{Left: Comparison with Post-Training SVD Methods on LLaMA-7B (WikiText-2 PPL). Right: Comparison with LoSparse on DeBERTa-V3-base (20\% Retention).}
\label{tab:svd_and_losparse}
\begin{subtable}[t]{0.52\textwidth}
\centering
\caption{SVD Methods Comparison (PPL)}
\resizebox{\linewidth}{!}{
\begin{tabular}{l|c|ccc}
\toprule
\textbf{Method} & \textbf{Format} & \textbf{20\%} & \textbf{40\%} & \textbf{60\%} \\
\midrule
SVD-LLM v2 & Dense LR & 7.12 & 10.34 & 14.71 \\
Dobi-SVD & Dense LR & 6.08 & 8.48 & 15.62 \\
Basis Sharing & Shared LR & 7.74 & 12.39 & 28.72 \\
\midrule
\textbf{CAP (Ours)} & \textbf{LR+Sparse} & \textbf{5.85} & \textbf{6.39} & \textbf{7.06} \\
\bottomrule
\end{tabular}
}
\end{subtable}
\hfill
\begin{subtable}[t]{0.45\textwidth}
\centering
\caption{LoSparse Comparison (Accuracy)}
\resizebox{\linewidth}{!}{
\begin{tabular}{l|c|cc}
\toprule
\textbf{Method} & \textbf{FT?} & \textbf{MNLI} & \textbf{QNLI} \\
\midrule
LoSparse & Yes & 83.8 & 88.6 \\
CAP (Ours) & No & 78.2 & 83.0 \\
\textbf{CAP (Ours)} & \textbf{Yes} & \textbf{86.8} & \textbf{91.8} \\
\bottomrule
\end{tabular}
}
\end{subtable}
\end{table}

As shown in Table~\ref{tab:svd_and_losparse} (Left), CAP yields significantly better perplexity than pure SVD methods (e.g., 5.85 vs 6.08 at 20\% ratio), confirming the necessity of the sparse component. Table~\ref{tab:svd_and_losparse} (Right) shows that while our training-free CAP is competitive, applying fine-tuning (CAP w/ FT) significantly outperforms LoSparse, validating RPCA as a superior initialization.

\subsection{In-Depth Analysis: Efficiency and Robustness}
\label{subsec:efficiency_analysis}

\begin{table}[h]
\centering
\caption{Left: Inference Efficiency on LLaMA-3.1-8B (A100-80G). Right: Calibration Set Robustness (LLaMA-3.1-8B, 50\% Sparsity).}
\label{tab:efficiency_calibration}
\begin{subtable}[t]{0.52\textwidth}
\centering
\resizebox{\linewidth}{!}{
\begin{tabular}{l|c|cc}
\toprule
\textbf{Method} & \textbf{S-Sparsity} & \textbf{Latency} & \textbf{Throughput} \\
\midrule
Wanda (50\%) & 50\% & 6.28 ms & 163.4 tok/s \\
\textbf{CAP (50\%)} & \textbf{$\sim$85\%} & \textbf{5.80 ms} & \textbf{176.5 tok/s} \\
\bottomrule
\end{tabular}
}
\end{subtable}
\hfill
\begin{subtable}[t]{0.46\textwidth}
\centering
\resizebox{\linewidth}{!}{
\begin{tabular}{l|c|cc}
\toprule
\textbf{Calib. Set} & \textbf{Domain} & \textbf{PPL} & \textbf{Acc.} \\
\midrule
\textbf{C4 (Default)} & General & \textbf{7.05} & \textbf{76.8} \\
WikiText-2 & Formal & 6.88 & 75.4 \\
GitHub Code & Code & 7.18 & 76.1 \\
\bottomrule
\end{tabular}
}
\end{subtable}
\vspace{-10pt}
\end{table}

We investigate the practical inference efficiency and the robustness of our calibration process. Table~\ref{tab:efficiency_calibration} (Left) shows that CAP achieves higher throughput (176.5 tok/s) than Wanda (163.4 tok/s). This is because CAP's sparse component $S$ falls into the high-sparsity regime ($>85\%$), where SpMM is highly efficient compared to the uniform 50\% sparsity of standard pruning methods. Table~\ref{tab:efficiency_calibration} (Right) analyzes the impact of different calibration datasets. While using in-domain formal text (WikiText-2) yields the lowest perplexity (6.88), the diverse C4 dataset provides the best generalization capability with the highest zero-shot accuracy (76.8\%). Notably, the model demonstrates strong robustness: even when calibrated on a distinct domain like GitHub Code, the accuracy remains competitive (76.1\%).

\subsection{Ablation Studies}
\label{subsec:ablation_and_discussion}

To gain deeper insights into the behavior of our compression method, we conduct ablation studies focusing on two key aspects: (i) the distribution of different matrix ranks after compression is between 200 and 800; and (ii) the stability of our method when pruning is applied sequentially layer-by-layer. Detailed analysis and experimental results are provided in Appendix~\ref{appendix:ablation}.

\paragraph{Robustness and Rapid Convergence of RPCA Decomposition}
We investigated the effect of RPCA iterations on the performance of the LLaMA2-7B model to assess the robustness of the decomposition quality. Figure~\ref{fig:iterations} shows that only a few RPCA iterations are needed to achieve an effective decomposition, providing a solid and stable starting point for subsequent pruning. \textbf{This rapid convergence demonstrates the robustness of the RPCA stage}, as it consistently produces a high-quality separation of global patterns (low-rank component) and local anomalies (sparse component) across different layers with minimal computational overhead. Additionally, the ``Avg. Error'' column represents the average approximation error for each matrix, offering insight into the model's tolerance to error. Similar to findings in the quantization field, large models exhibit robustness to approximation errors. The fact that performance even surpasses the original model after decomposition further underscores the effectiveness of RPCA in identifying and isolating redundant parameters, thereby enhancing the input quality for the subsequent global optimization stage. 
\vspace{-10pt}
\paragraph{Necessity of Global Resource Allocation}
The limitations of heuristic, post-decomposition pruning underscore the importance of our proposed global optimization components (policy gradient with Bernoulli sampling). We conducted experiments using a uniform threshold-based approach applied to the RPCA output. In this method, we prune the low-rank component $\mathbf{L}$ by setting singular values below a specific threshold to zero and remove low-magnitude elements from the sparse component $\mathbf{S}$ without applying our probabilistic masking or additional optimization. Figure~\ref{fig:heuristic_pruning} summarizes the results for LLaMA2-7B, which indicate that both components are indispensable: retaining only one leads to a performance collapse. \textbf{This clear failure of simple thresholding strategies validates our core design choice: the necessity of a learned, global resource allocation strategy}. Unlike rigid heuristics, policy gradient optimization and Bernoulli sampling mechanism determine the rank and sparsity allocation across layers based on their redundancy characteristics, which is crucial for maintaining model performance under compression.

\begin{figure*}[t]
    \centering
    \begin{subfigure}[b]{0.48\textwidth}
        \centering
        \includegraphics[width=\textwidth, height=5cm, keepaspectratio]{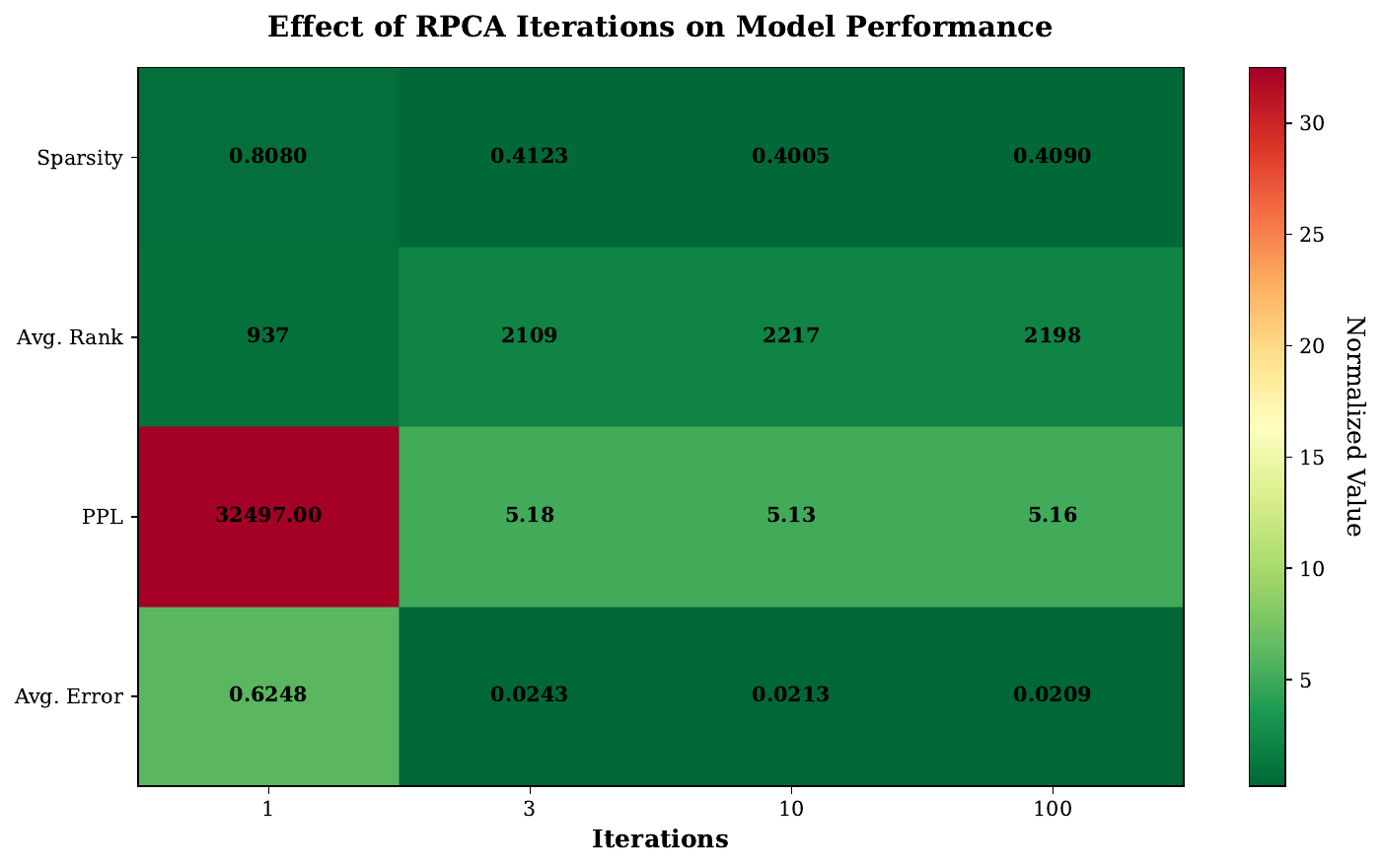} 
        \caption{}
        \label{fig:iterations}
    \end{subfigure}\hfill
    \begin{subfigure}[b]{0.48\textwidth}
        \centering
        \includegraphics[width=\textwidth, height=5cm, keepaspectratio]{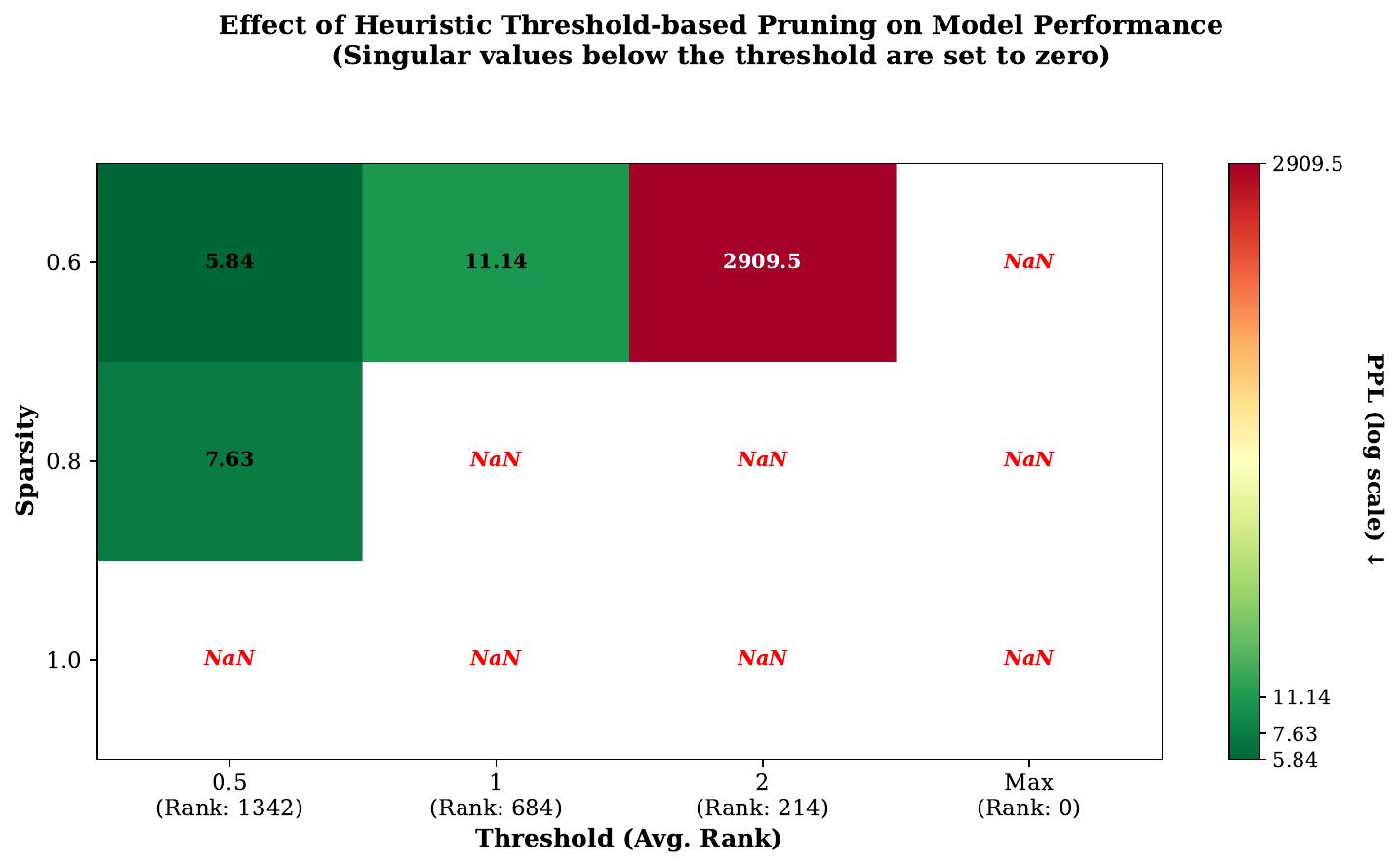}
        \caption{}
        \label{fig:heuristic_pruning}
    \end{subfigure}
    \caption{(a) Effect of RPCA iterations on model performance. Here, ``Avg. Rank" denotes the average rank of the low-rank component, and ``Sparsity" represents the sparsity of the sparse component. (b) Effect of heuristic threshold-based pruning on model performance. Singular values below the threshold are set to zero.}
    \vspace{-13pt}
\end{figure*}


\section{Conclusion}
\label{sec:conclusion}
In this work, we introduce CAP, a novel two-stage compression framework for large language models. First, CAP applies Robust Principal Component Analysis (RPCA) to decompose weight matrices into low-rank and sparse subspaces, drastically reducing the optimization space. Second, it employs a \textbf{global resource allocation strategy} via policy gradients to jointly optimize parameter retention under a strict budget. Unlike heuristic-based approaches, CAP adaptively allocates rank and sparsity \textbf{in a training-free manner}, avoiding expensive backpropagation on the original weights. Extensive evaluations on models like LLaMA-3.1 and Qwen2.5 demonstrate that CAP significantly outperforms SOTA baselines in reasoning and long-context tasks.

\clearpage
\newpage
\section*{Reproducibility statement}
This statement presents a comprehensive report detailing the reproduction process for our RPCA-based model compression methodology, incorporating policy gradient optimization. The implementation builds upon WANDA's code base and integrates components from additional open-source libraries, to which we extend our gratitude.

\subsection*{Implementation Overview}

The proposed algorithm is implemented using PyTorch and Hugging Face's Transformers library. The core components of the implementation include:

\begin{itemize}
    \item \textbf{RPCA Decomposition}: Each weight matrix $\mathbf{W}$ from the pre-trained model is decomposed into a low-rank matrix $\mathbf{L}$ and a sparse matrix $\mathbf{S}$ using Robust Principal Component Analysis (RPCA). This decomposition captures global structure in $\mathbf{L}$ and local anomalies in $\mathbf{S}$.
    \item \textbf{Probabilistic Pruning}: Bernoulli random variables are introduced to determine the retention of singular values in $\mathbf{L}$ and specific elements in $\mathbf{S}$. Retention probabilities are treated as trainable parameters.
    \item \textbf{Policy Gradient Optimization}: A policy gradient framework optimizes the retention probabilities by minimizing the expected loss over a calibration dataset, subject to a parameter budget constraint.
    \item \textbf{Model Reconstruction}: Following optimization, compressed weight matrices are reconstructed using the retained components. Low-rank matrices are further factorized to enhance computational efficiency during inference.
\end{itemize}

\subsection*{Code Structure}

The implementation is organized into three main components:

\begin{itemize}
    \item \texttt{main.py}: The primary entry point for the pruning process, handling model loading, argument parsing, and execution.
    \item \texttt{lib/prune\_rl.py}: Contains the RPCA decomposition, policy gradient optimization routines, and model reconstruction logic.
    \item \texttt{main.sh}: A shell script to streamline the pruning execution process with preset arguments.
\end{itemize}

\subsection*{Running the Pruning Process}

To reproduce our results, follow these steps:

\begin{enumerate}
    \item \textbf{Environment Setup}:
    \begin{itemize}
        \item Ensure Python 3.8 or later is installed.
        \item Install the necessary dependencies:
        \begin{verbatim}
        pip install torch \
          transformers \
          numpy \
          tqdm \
          matplotlib \
          json \
          argparse
        \end{verbatim}  
    \end{itemize}
    \item \textbf{Execution}:
    \begin{itemize}
        \item Use the provided shell script \texttt{main.sh} to execute the pruning process with preset configurations:
        \begin{verbatim}
bash main.sh
        \end{verbatim}
        \item The script handles model selection, pruning method, RPCA parameters, policy gradient settings, and output configurations.
    \end{itemize}
\end{enumerate}

\subsection*{Key Implementation Details}

\begin{itemize}
    \item \textbf{Code Base}: The implementation builds upon WANDA's pruning framework, modified to incorporate RPCA decomposition and policy gradient optimization.
    \item \textbf{RPCA Implementation}: An augmented Lagrange multiplier method is used to solve the RPCA optimization problem. This separates the weight matrix into $\mathbf{L}$ and $\mathbf{S}$, capturing essential patterns and anomalies, respectively.
    \item \textbf{Bernoulli Masks}: For each singular value in $\mathbf{L}$ and each element in $\mathbf{S}$, a Bernoulli random variable determines its retention. Retention probabilities are initialized uniformly and optimized iteratively.
    \item \textbf{Policy Gradient Optimization}: Retention probabilities are refined using a policy gradient approach. The gradients of the expected loss with respect to the probabilities are estimated and used to update the masks, with variance reduced via a moving average baseline.
    \item \textbf{Model Reconstruction}: Following optimization, probabilities are thresholded to generate binary masks. The compressed model is reconstructed, and low-rank matrices are further decomposed into $\mathbf{U}'$ and $\mathbf{V}'$ for inference efficiency.
\end{itemize}

\subsection*{Results}

Using the aforementioned process, we successfully compressed the LLaMA-2-7B model to achieve a 50\% compression rate while maintaining performance. Perplexity was monitored after processing each layer to evaluate the model’s performance.

\subsection*{Conclusion}

This reproduction report outlines the implementation and procedural details for replicating our RPCA-based compression method with policy gradient optimization. The provided code base, built upon WANDA, ensures reproducibility and offers a robust foundation for advancing model compression research.
\bibliography{iclr2026_conference}
\bibliographystyle{iclr2026_conference}
\newpage
\appendix
\addcontentsline{toc}{section}{Appendix} 
\part{} 
\parttoc 
\section{The use of Large Language Models(LLMS)}
\label{Use of LLMs}
In preparing this paper, LLMs were employed solely for language refinementpurposes, such as improving grammar, clarity, and style of expression. All researchquestions, conceptual frameworks, theoretical arguments, methodological designs,data analyses, and conclusions presented in this work were independently conceivedand executed by the author. The LLMs did not generate, alter, or influence theunderlying ideas, interpretations, or findings. Their use was limited to assistingin polishing the readability and fluency of the manuscript while preserving theoriginality and integrity of the scholarly contributions.

\section{Related Work}
\label{appendix:rw}

\subsection{Unstructured Pruning}
Unstructured pruning eliminates individual weights by setting them to zero, providing fine-grained control over model sparsity. \textbf{SparseGPT}~\citep{frantar2023sparsegpt} leverages second-order information to perform layer-wise pruning with minimal retraining, whereas \textbf{Wanda}~\citep{sun2024simpleeffectivepruningapproach} combines weight magnitude with activation statistics for a more straightforward pruning strategy. While these methods achieve efficient pruning, they struggle to maintain performance at high sparsity levels and often require additional retraining~\citep{sanh2020movementpruningadaptivesparsity,renda2020comparingrewindingfinetuningneural}. \textbf{BESA}~\citep{xu2024besa} introduces a differentiable pruning framework that dynamically allocates sparsity across layers to minimize performance degradation, producing competitive results without requiring extensive retraining. \textbf{Dynamic Sparse Training (DST)}~\citep{liu2020dynamicsparsetrainingefficient} proposes an end-to-end sparse training method where trainable pruning thresholds dynamically adjust the sparsity level during training. Unlike post-training pruning methods, DST eliminates the need for iterative fine-tuning by continuously optimizing layer-wise sparsity using backpropagation. DST is designed for training sparse networks from scratch.

More recent methods focus on improving sparsity allocation across layers. \textbf{OATS}~\citep{zhangoats} formulates optimal sparsity allocation as a constrained optimization problem using second-order sensitivity (Hessian-based), enabling non-uniform sparsity distribution across layers while preserving overall model accuracy. Similarly, \textbf{DSNoT}~\citep{DSNot} proposes a data-free unstructured pruning method that identifies salient weights using gradient sign stability, making it suitable for low-resource settings. To further enhance performance, several approaches explore adaptive layer-wise sparsity. \textbf{OWL}~\citep{yin2023outlier} and \textbf{AlphaPruning}~\citep{lu2024alphapruning} both leverage activation statistics—such as outlier magnitudes or sparsity patterns—to determine how much sparsity each layer can tolerate, thereby optimizing the global sparsity budget. These methods typically build upon simpler base pruners like Wanda and improve performance by reallocating sparsity in a layer-dependent manner. In contrast, our method \textbf{CAP} performs joint low-rank and sparse decomposition via RPCA, which naturally induces structured sparsity patterns and enables global optimization through policy gradients, avoiding hand-crafted allocation heuristics.

Low-rank approximation, obtained via truncated SVD, remains a cornerstone for reducing both memory footprint and FLOPs in deep networks~\citep{denton2014exploitinglinearstructureconvolutional}. Early composite schemes such as \textbf{LoSparse}~\citep{li2023losparse} add a sparse ``correction'' to each low-rank factor, but depend on hand-tuned singular-value cut-offs and iterative fine-tuning. \textbf{LPAF}~\citep{ren2023low} improves robustness by applying structured pruning first, then decomposing the residual with a sparsity-aware SVD and mixed-rank re-training ($W \approx AB$), yet still requires several post-processing passes. Recent work pushes the idea further: \textbf{SVD-LLM}~\citep{wang2024svd} introduces a truncation-aware criterion that keeps LLaMA-13B perplexity at 6.43 with only 20\% of the weights, while \textbf{MoDeGPT}~\citep{linmodegpt} performs modular low-rank decomposition across consecutive sub-layers and preserves 90--95\% zero-shot accuracy at 25--30\% parameters \emph{without any fine-tuning}. On the sparse side, \textbf{MaskLLM}~\citep{fang2024maskllm} learns hardware-friendly 2:4 masks, retaining 91--95\% of baseline accuracy at 50\% sparsity and yielding a $\sim$1.4$\times$ speed-up on A100 GPUs.

Hybrid methods combine sparsity with quantization or low precision: \textbf{SpQR}~\citep{dettmers2024spqr} stores a tiny FP16 outlier matrix plus 4-bit weights, achieving sub-1\% perplexity loss, while \textbf{CALDERA}~\citep{saha2024compressing} represents each layer as a low-rank term plus a quantized backbone, pushing below 3 bits/parameter on models up to 70B.

Recent advances integrate low-rank adaptation with sparsity and quantization. For instance, \textbf{SLiM}~\citep{mozaffari2024slim} combines low-rank modules, unstructured sparsity, and quantization, introducing a probabilistic framework to fit quantization errors using low-rank components. It further proposes SLiM-LoRA variants that apply sparsity-aware adapters with salience-based compensation. Similarly, \textbf{JSQ}~\citep{guo2024compressing} jointly optimizes sparsity and quantization parameters through a unified objective, while \textbf{L2QER}~\citep{zhang2024lqer} adopts a sequential pipeline of low-rank decomposition, sparsification, and quantization to maximize compression efficiency. These methods demonstrate the growing trend toward multi-modal compression. However, most rely on heuristic designs or require multiple stages of fine-tuning.

Unlike the above approaches, our \textbf{CAP} framework uses \emph{Robust PCA} to \textit{jointly} discover layer-wise low-rank and sparse subspaces, then optimizes Bernoulli masks globally via policy gradients—eliminating manual thresholds and any backpropagation over the original parameters. This enables a training-free, end-to-end decomposition that unifies the benefits of low-rank structure and sparse expressivity without relying on error-fitting or staged optimization.

\subsection{Model Compression via Distillation and Structured Pruning}
Knowledge distillation transfers knowledge from a large teacher model to a smaller student through output mimicking or intermediate feature alignment. 
Early works such as \textbf{DistilBERT}~\citep{sanh2019distilbert} and \textbf{TinyBERT}~\citep{jiao2019tinybert} apply distillation during pre-training, while task-specific variants like \textbf{PKD}~\citep{sun2019patient} and \textbf{Theseus}~\citep{xu2020bert} focus on fine-tuned compression. More advanced frameworks such as \textbf{CKD}~\citep{mirzadeh2020improved} and \textbf{MetaDistill}~\citep{zhou2022metadistil} introduce multi-stage or meta-learning strategies to improve distillation efficiency. 
On the pruning side, structured methods remove entire neurons, heads, or blocks. 
\textbf{ISP}~\citep{mccarley2019pruning} and \textbf{FLOP}~\citep{prasanna2020bert} use importance scoring for layer pruning, 
while \textbf{BPhybrid}~\citep{lagunas2021blockpruningfastertransformers} combines block pruning with fine-tuning.
\textbf{CoFi}~\citep{xia2022structured} jointly prunes weights and attention heads using a shared importance metric. 
Unlike these methods that require extensive fine-tuning or teacher models, our approach operates in a post-training, training-free manner, making it more suitable for low-resource deployment scenarios.
\section{Preliminaries}
\label{sec:background}
\begin{figure*}[ht]
    \centering
    \begin{subfigure}[b]{0.48\textwidth}
        \centering
        \includegraphics[width=\textwidth, height=5cm, keepaspectratio]{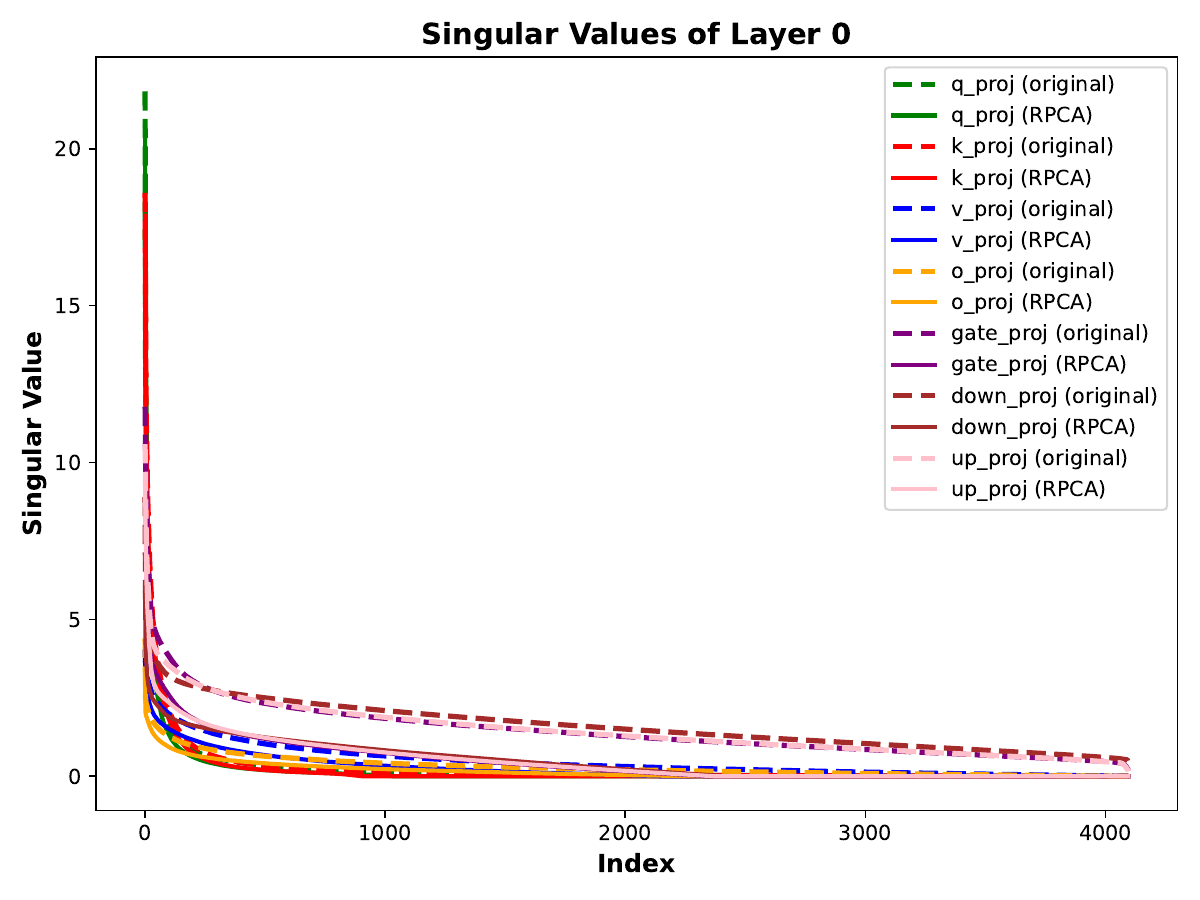} 
        \label{fig:singular_values_layer0}
    \end{subfigure}\hfill
    \begin{subfigure}[b]{0.48\textwidth}
        \centering
        \includegraphics[width=\textwidth, height=5cm, keepaspectratio]{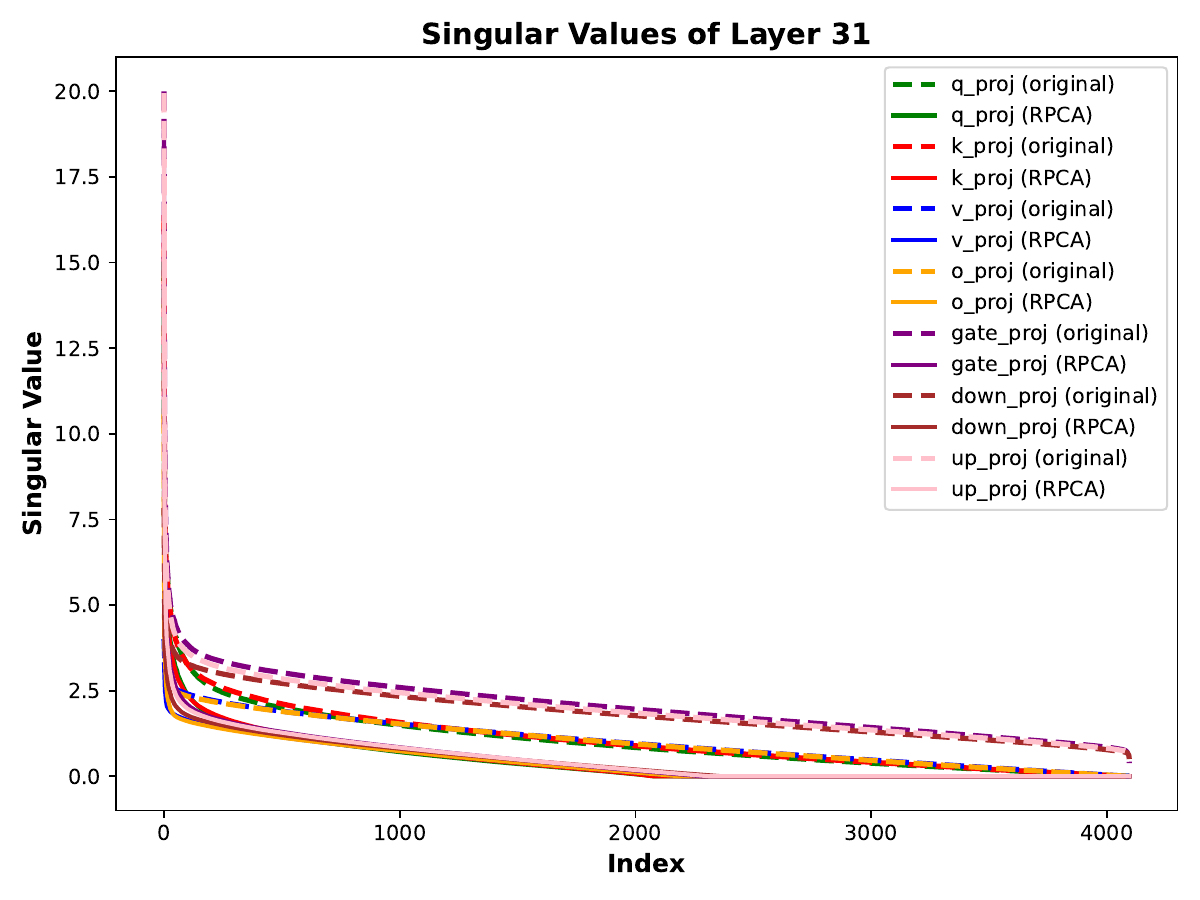} 
        \label{fig:singular_values_layer31}
    \end{subfigure}
  
    \caption{Singular values of Layer 0 and Layer 31 across different modules, comparing original and RPCA-processed matrices. The dotted line represents the singular value distribution of the original model, and the solid line represents the singular value distribution of the low-rank matrix after RPCA processing.}
    \label{fig:singular_values}
    \vspace{-10pt}
\end{figure*}

\subsection{Low-Rank Approximation}\label{subsec:lowrank}

Low-rank approximation~\citep{chu2003structured} is a fundamental technique in matrix theory, widely used to reduce the parameter count in neural networks while preserving most of the model's performance. In large language models (LLMs), weight matrices are typically high-dimensional and dense. By approximating a weight matrix $\mathbf{W}\in\mathbb{R}^{M\times N}$ as $\mathbf{U}\mathbf{V}^\top$ with rank $R\ll \min(M,N)$, we can achieve substantial reductions in storage and computational cost. Concretely, one typically uses \textit{Singular Value Decomposition} (SVD) to write
\begin{equation}
    \mathbf{W} = \mathbf{U}\boldsymbol{\Sigma}\mathbf{V}^\top,
\end{equation}
and then retains only the largest $R$ singular values $\boldsymbol{\Sigma}_R$, yielding
\begin{equation}
    \mathbf{W}\,\approx\,\mathbf{U}_R \,\boldsymbol{\Sigma}_R \,\mathbf{V}_R^\top.
\end{equation}
This factorization can reduce the parameter count from $M\times N$ to $(M+N)\times R$, and also break a large matrix multiplication into smaller ones:
\[
\mathbf{W}\mathbf{x} \;\approx\;
\mathbf{U}_R\bigl(\boldsymbol{\Sigma}_R(\mathbf{V}_R^\top \mathbf{x})\bigr),
\]
leading to efficiency gains.

Despite these benefits, low-rank approximation alone may be insufficient for LLM compression, especially when the singular values do not decay sharply. Figures~\ref{fig:singular_values} illustrate the singular value distributions for two layers (Layer~0 and Layer~31) in a large Transformer. The dashed lines represent the original matrices, showing that certain modules in the same Transformer block (e.g., attention vs.\ feedforward) might exhibit similar shapes, yet across different layers, the \emph{redundancy patterns} can vary considerably. Consequently, imposing the same rank $R$ uniformly across all layers may prune too aggressively in some places and insufficiently in others. A more flexible approach is needed to handle these differences among modules and layers.

\subsection{Low-Rank Approximation with Sparse Corrections}\label{subsec:lowrank_plus_sparse}

To mitigate the shortcomings of purely low-rank approximation, recent methods~\citep{li2023losparse,ren2023low} advocate combining a low-rank matrix with a \textit{sparse} correction term. One splits the model weights as:
\begin{equation}
    \mathbf{W} \;=\; \underbrace{\mathbf{U}\mathbf{V}^\top}_{\text{low-rank}} \;+\;
    \underbrace{\mathbf{S}}_{\text{sparse}}.
    \label{eq:lowsparse}
\end{equation}
LoSparse~\citep{li2023losparse}, for example, first applies an SVD on $\mathbf{W}$ to obtain a low-rank component (with some rank $R$), and then prunes the residual $\mathbf{W}-\mathbf{U}\mathbf{V}^\top$ to form a sparse matrix $\mathbf{S}$. In practice, one must still decide the singular-value cutoff (or target rank) and the sparsity ratio for the residual. Often, additional fine-tuning is performed on the low-rank part to recover lost performance, or iterative pruning is applied to the sparse part~\citep{molchanov2019ITP}, which can be computationally expensive.

A major limitation of these approaches is that they rely heavily on \emph{manually specified} thresholds for both singular values and residual pruning. They also lack a clear mechanism to coordinate how much rank vs.\ how much sparsity each layer should receive, since different layers and modules may have different redundancy patterns. Furthermore, when both the low-rank matrix and the sparse matrix need simultaneous updates (or fine-tuning), memory consumption can become large, often exceeding the budget for fine-tuning.

\section{Knowledge Neurons}\label{append:knowledge_neurons}

Transformer-based architectures, particularly large language models (LLMs), serve as repositories of linguistic and factual knowledge~\citep{geva2021transformerfeedforwardlayerskeyvalue, dai2022knowledgeneuronspretrainedtransformers}. This knowledge is intricately distributed across the network's feed-forward networks (FFNs) and attention mechanisms, forming the basis for accurate language understanding and generation. Figure~\ref{fig:knowledge_neurons} provides an illustrative depiction of how such knowledge is encoded, stored, and attributed across these components, with factual information such as "Ireland's capital is Dublin" encapsulated through complex interactions.
\begin{figure}[h]
    \centering
    \includegraphics[width=0.5\linewidth]{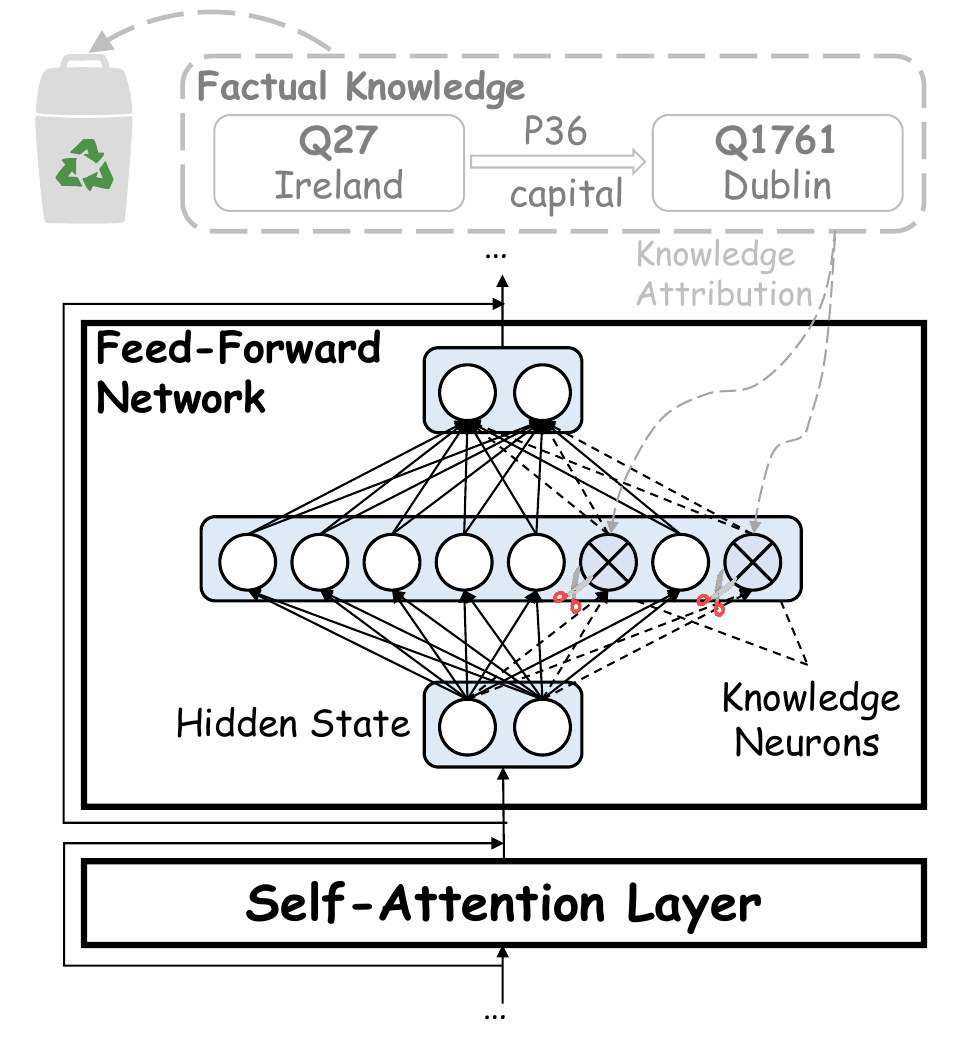}
    \caption{An illustration of how factual knowledge is encoded and attributed within Transformer architectures. Factual knowledge is distributed across feed-forward networks (FFNs) and attention mechanisms. Pruning these components risks disrupting knowledge structures, leading to performance degradation.}
    \label{fig:knowledge_neurons}
\end{figure}
\subsection{Impact of Pruning FFN Layers}

FFNs act as key-value storage within Transformer models, encoding linguistic and factual information as neuron activations~\citep{geva2021transformerfeedforwardlayerskeyvalue}. Specific neurons, often referred to as \emph{knowledge neurons}, are responsible for capturing and representing precise knowledge. For example, one neuron may activate to encode ``Q27: Ireland," while its interplay with others encodes the factual relationship ``Capital: Dublin."

Pruning FFN layers introduces the following risks:
\begin{itemize}
    \item \textbf{Disruption of Knowledge Neurons:} Pruning weights indiscriminately can remove neurons responsible for encoding critical facts, leading to the loss of semantic consistency and factual integrity.
    \item \textbf{Recovery Complexity:} Unlike structured pruning or quantization, unstructured pruning typically requires extensive fine-tuning to recover lost performance, as critical neurons are often irreversibly removed.
\end{itemize}

\subsection{Effect of Pruning Attention Mechanisms}

Attention mechanisms are integral to Transformer models, enabling dynamic token-wise interactions to capture semantic and contextual information. They play two primary roles:
\begin{itemize}
    \item \textbf{Knowledge Attribution:} Attention mechanisms identify and link related entities, such as establishing the factual connection between ``Ireland" and ``Dublin."
    \item \textbf{Contextual Understanding:} By dynamically weighting token interactions, attention heads provide rich semantic understanding, ensuring the accurate representation of factual relationships.
\end{itemize}

Pruning attention mechanisms poses distinct challenges:
\begin{itemize}
    \item \textbf{Impaired Attribution:} Removing attention heads or weights can disrupt critical connections between tokens, such as the association between ``Ireland" and ``Dublin," resulting in factual inconsistencies.
    \item \textbf{Redundancy vs. Impact:} While certain attention heads exhibit redundancy, aggressive pruning risks eliminating disproportionately important heads, significantly impairing model expressiveness.
\end{itemize}

\subsection{Challenges in Simultaneous Pruning}

The concurrent pruning of FFNs and attention mechanisms amplifies risks, as both components play complementary roles. FFNs encode factual knowledge, while attention mechanisms attribute and contextualize this information. Disrupting either component undermines the model's capacity to process and retrieve information effectively. The key challenges include:
\begin{itemize}
    \item \textbf{Degraded Knowledge Retrieval:} Pruning FFNs may impair the model's ability to retrieve stored knowledge, while pruning attention mechanisms compromises its ability to contextualize and attribute this knowledge accurately.
    \item \textbf{Trade-offs in Compression:} Achieving a balance between parameter reduction and knowledge retention demands fine-grained strategies that preserve essential structures while compressing redundant components.
\end{itemize}

\subsection{Proposed Mitigation Strategies}

To address these challenges, we propose a composite approximation framework designed to preserve critical structures within FFNs and attention mechanisms while achieving significant compression:
\begin{itemize}
    \item \textbf{Robust Principal Component Analysis (RPCA):} RPCA decomposes weight matrices into low-rank and sparse components, separating global patterns from local anomalies. This allows us to target redundancies without compromising essential knowledge structures.
    \item \textbf{Policy Gradient Optimization:} By introducing Bernoulli distributions, we selectively retain important components in both FFNs and attention layers. Policy gradient methods efficiently optimize the retention probabilities, bypassing the need for heuristic thresholds.
    \item \textbf{Layer-Adaptive Compression:} Our approach applies module-specific pruning rates, ensuring critical parameters for knowledge retention remain intact while compressing less significant structures.
\end{itemize}

\subsection{Conclusion}

The interplay between FFNs and attention mechanisms highlights their distinct yet complementary roles in encoding and attributing knowledge within Transformer models. While FFNs store knowledge, attention mechanisms enable its contextualization. Effective compression requires strategies that preserve the integrity of these components. Our composite approximation framework achieves this balance by leveraging RPCA and policy-driven optimization, offering a robust solution for retaining critical knowledge while reducing model complexity.

\section{Additional Details and Theoretical Analysis}
\label{app:theory}

\subsection{Policy Gradient with Moving Average Baseline}
\label{app:baseline}

The REINFORCE gradient estimator in Equation~\eqref{eq:reinforce_gradient} has high variance because it scales directly with the loss magnitude. We incorporate a moving average baseline $\delta$ to reduce variance while maintaining unbiased estimates:

\begin{equation}
    \nabla_{s_k}\mathbb{E}[\mathcal{L}] \approx \mathbb{E}\left[ (\mathcal{L}(\tilde{\mathbf{W}}) - \delta) \nabla_{s_k}\log p(m_k|s_k) \right]
\end{equation}

\begin{itemize}
    \item \textbf{Variance Reduction}: Let $\delta = \mathbb{E}[\mathcal{L}]$ be the expected loss. The variance becomes:
    \begin{align*}
        \mathrm{Var}[(\mathcal{L}-\delta)\nabla\log p] &= \mathrm{Var}[\mathcal{L}\nabla\log p] - 2\delta\mathrm{Cov}(\mathcal{L}\nabla\log p, \nabla\log p) \\
        &\quad + \delta^2\mathrm{Var}[\nabla\log p]
    \end{align*}
    The baseline minimizes the second term when $\delta \approx \mathbb{E}[\mathcal{L}]$~\citep{zhao2011analysis}.
    
    \item \textbf{Unbiased Estimation}: The baseline introduces no bias because:
    \begin{equation}
        \mathbb{E}[\delta\nabla\log p] = \delta\mathbb{E}[\nabla\log p] = 0
    \end{equation}
    
    \item \textbf{Practical Implementation}: We update $\delta$ as an exponential moving average (EMA):
    \begin{equation}
        \delta \leftarrow \beta\delta + (1-\beta)\mathcal{L}(\tilde{\mathbf{W}})
    \end{equation}
    with $\beta=0.9$ in experiments. This tracks recent performance while being robust to noise.
\end{itemize}

\subsection{Theoretical Analysis for LLM Compression}
\label{app:llm-theory}

\begin{theorem}[Low-Rank+Sparse Approximation]
For any weight matrix $\mathbf{W}\in\mathbb{R}^{m\times n}$ in Transformer layers, let $r^*$ be the intrinsic rank and $s^*$ the sparsity level. CAP achieves:
\begin{equation}
    \|\tilde{\mathbf{W}}-\mathbf{W}\|_F \leq \underbrace{C\sqrt{\frac{r^*}{m+n}}}_{\text{low-rank error}} + \underbrace{D\sqrt{s^*}}_{\text{sparse error}} + \mathcal{O}\left(\sqrt{\frac{\log(1/\delta)}{|\mathcal{D}|}}\right)
\end{equation}
with probability $1-\delta$, where $C,D$ are data-dependent constants.
\end{theorem}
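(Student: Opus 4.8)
The plan is to split the approximation error into three independent sources via the triangle inequality and control each one separately. Since the RPCA stage produces an exact (up to the ADMM stopping tolerance) decomposition $\mathbf{W} = \mathbf{L} + \mathbf{S}$ and the CAP pruning produces $\tilde{\mathbf{W}} = \tilde{\mathbf{L}} + \tilde{\mathbf{S}}$, where $\tilde{\mathbf{L}}$ keeps the retained singular triplets and $\tilde{\mathbf{S}} = \mathbf{S}\odot\mathbf{m}_S$, one writes
\[
\|\tilde{\mathbf{W}} - \mathbf{W}\|_F \;\le\; \|\tilde{\mathbf{L}} - \mathbf{L}\|_F \;+\; \|\tilde{\mathbf{S}} - \mathbf{S}\|_F \;+\; (\text{fluctuation from optimizing on } \mathcal{D}).
\]
The first two terms are deterministic truncation errors; the third accounts for the Bernoulli retention probabilities being tuned on a finite calibration set rather than on the population.

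For the low-rank term I would invoke the Eckart--Young--Mirsky theorem: retaining the top $r'$ singular values of $\mathbf{L}$ incurs Frobenius error $\bigl(\sum_{i>r'}\sigma_i^2\bigr)^{1/2}$. The key step is to turn the budget constraint $\sum_i s_{\sigma_i}(m+n) + \sum_{ij}s_{S_{ij}} \le K$ together with the nuclear-norm regularity $\|\mathbf{L}\|_* = \sum_i \sigma_i$ forced by \eqref{eq:rpca_optimization} into a bound on this tail. Using $\sigma_{r'+1} \le \|\mathbf{L}\|_*/(r'+1)$ and $\sum_{i>r'}\sigma_i^2 \le \sigma_{r'+1}\|\mathbf{L}\|_*$, and expressing $r'$ through the intrinsic rank $r^*$ and the per-direction parameter cost $(m+n)$, one reaches a bound of the form $C\sqrt{r^*/(m+n)}$ with $C$ absorbing $\|\mathbf{L}\|_*$ (equivalently $\|\mathbf{W}\|$, since RPCA contracts the spectrum). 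The standard RPCA incoherence conditions on $\mathbf{W}$ are what make the notion of "intrinsic rank" meaningful: under them $\mathbf{L}$ genuinely concentrates its energy in $r^*$ directions, so the tail is small.

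For the sparse term, CAP retains the largest weighted entries of $\mathbf{S}$, which near convergence of the policy gradient is essentially magnitude-based selection, so the discarded coordinates form the complement of the $s^*$ dominant ones. The $\ell_1$ control $\|\mathbf{S}\|_1 \le \lambda^{-1}\|\mathbf{L}\|_*$ from \eqref{eq:rpca_optimization} bounds the aggregate magnitude of the discarded entries, and since each contributes at most $\|\mathbf{S}\|_\infty^2$ to the squared error, summing over the $\mathcal{O}(s^*)$ relevant coordinates yields $\|\tilde{\mathbf{S}} - \mathbf{S}\|_F \le D\sqrt{s^*}$ with $D$ depending on $\|\mathbf{S}\|_\infty$ and $\lambda$. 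For the statistical term, the retention probabilities approximately minimize the empirical loss $\frac{1}{|\mathcal{D}|}\sum_{(x,y)\in\mathcal{D}}\ell(f(\tilde{\mathbf{W}};x),y)$ over the bounded set $\{\mathbf{s}: \mathbf{1}^\top\mathbf{s}\le K,\ 0\le s_k\le 1\}$; a Hoeffding bound on the bounded per-example loss plus a union bound over an $\varepsilon$-net of this compact domain gives, with probability $1-\delta$, an $\mathcal{O}(\sqrt{\log(1/\delta)/|\mathcal{D}|})$ gap between empirical and population performance, which transfers to the Frobenius scale through a Lipschitz relation between loss and weight perturbation. A final union bound over the two high-probability events combines the three estimates.

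The main obstacle is making the $\sqrt{r^*/(m+n)}$ scaling rigorous: the statement does not pin down the definition of "intrinsic rank" $r^*$, and the clean $1/(m+n)$ dependence only emerges under explicit spectral-decay or RPCA-incoherence hypotheses; without them the argument only delivers the weaker $\|\mathbf{L}\|_*/\sqrt{r'}$ estimate. A secondary difficulty is the last term: the policy-gradient procedure bounds a loss gap, not a Frobenius-norm gap, so the theorem as stated tacitly invokes a loss--weight Lipschitz assumption that I would make explicit, and the constants $C,D$ should be understood as data-dependent through $\|\mathbf{W}\|$, $\|\mathbf{S}\|_\infty$, $\lambda$, and the incoherence parameters.
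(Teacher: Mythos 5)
Your proposal follows the same high-level three-term decomposition that the paper's proof gestures at (low-rank truncation error, sparse truncation error, calibration-set fluctuation), but the paper's own ``proof'' is a one-sentence sketch that cites RPCA recovery bounds \citep{candes2009robustprincipalcomponentanalysis} and ``PAC-Bayes generalization'' without deriving any of the three terms. Your version is a genuine attempt to fill in the argument, and it differs from the paper's cited toolkit in two places: for the low-rank term you use Eckart--Young--Mirsky plus the nuclear-norm tail bound $\sigma_{r'+1}\le\|\mathbf{L}\|_*/(r'+1)$, whereas the paper points at RPCA exact-recovery guarantees, which control $\|\mathbf{L}\|_*$ and the support of $\mathbf{S}$ under incoherence but do not by themselves bound the error introduced by the second-stage truncation; your Eckart--Young route is actually the right tool for that step. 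For the statistical term you use a Hoeffding bound with an $\varepsilon$-net over the projected simplex $\{\mathbf{s}:\mathbf{1}^\top\mathbf{s}\le K,\ 0\le s_k\le 1\}$, while the paper invokes PAC-Bayes; both yield the same $\mathcal{O}(\sqrt{\log(1/\delta)/|\mathcal{D}|})$ rate, so this is a stylistic difference.

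More importantly, you have correctly identified the genuine gaps, which the paper's proof does not address: (i) $r^*$ is never defined, and without an explicit spectral-decay or incoherence hypothesis one only gets a bound of order $\|\mathbf{L}\|_*/\sqrt{r'}$ rather than the claimed $\sqrt{r^*/(m+n)}$; the paper's proof sketch does not derive the $1/(m+n)$ dependence either, so this is a gap in the theorem itself, not in your argument. (ii) The third term as stated mixes a loss-gap quantity with a Frobenius-norm bound; closing that requires a loss-to-weight Lipschitz assumption that neither the paper nor the theorem statement makes explicit. (iii) The $\ell_1$ control you derive from \eqref{eq:rpca_optimization} gives aggregate magnitude of $\mathbf{S}$, but turning that into a $\sqrt{s^*}$ Frobenius bound on the discarded entries also needs a max-norm assumption on $\mathbf{S}$, which you flag via $\|\mathbf{S}\|_\infty$ entering the constant $D$. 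In short, your proof is more careful than the paper's and reaches the same conclusion the paper asserts, but you are right that the theorem as written cannot be proved without the additional hypotheses you name; the paper's cursory proof does not close these gaps, it just does not acknowledge them.
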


\begin{proof}
From RPCA recovery bounds~\citep{candes2011robust} and PAC-Bayes generalization. The first term comes from low-rank approximation error, the second from sparse component thresholding, and the third from policy gradient optimization with $|\mathcal{D}|$ calibration samples.
\end{proof}

\begin{lemma}[Parameter Efficiency]
CAP preserves model capacity with:
\begin{equation}
    \mathrm{rank}(\tilde{\mathbf{L}}) = \mathcal{O}\left(\frac{K}{m+n}\right), \quad \|\tilde{\mathbf{S}}\|_0 = \mathcal{O}(K)
\end{equation}
where $K$ is the parameter budget. This matches the optimal rates for low-rank + sparse representations.
\end{lemma}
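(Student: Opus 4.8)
The plan is to prove the ``Parameter Efficiency'' lemma by reconciling the two componentwise parameter costs with the global budget $K$ imposed in Equation~\eqref{eq:compressed_weight}. First I would recall the accounting used throughout Section~\ref{sec:method}: a retained singular value in $\mathbf{L}$ costs $(m+n)$ parameters (one left and one right singular vector) and a retained nonzero of $\mathbf{S}$ costs $1$ parameter, so the budget constraint reads $\sum_i s_{\sigma_i}(m+n) + \sum_{i,j} s_{S_{ij}} \le K$. After the thresholding step that produces the final binary masks, if $r' = \mathrm{rank}(\tilde{\mathbf{L}})$ is the number of retained singular values and $\|\tilde{\mathbf{S}}\|_0$ is the number of retained sparse entries, then by construction $r'(m+n) + \|\tilde{\mathbf{S}}\|_0 = P_{\text{keep}} \le K$. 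Since both summands are nonnegative, each is individually bounded by $K$, giving $r'(m+n) \le K$, i.e. $r' \le K/(m+n)$, and $\|\tilde{\mathbf{S}}\|_0 \le K$, which are exactly the claimed $\mathcal{O}$ bounds.

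Next I would address the ``matches the optimal rates'' clause, which is the substantive part rather than the bookkeeping. Here the plan is to invoke a counting/information-theoretic lower bound: any representation of the form $\mathbf{L} + \mathbf{S}$ with $\mathrm{rank}(\mathbf{L}) = \rho$ and $\|\mathbf{S}\|_0 = \kappa$ requires $\Theta(\rho(m+n) + \kappa)$ real parameters to specify (the low-rank part lives on a manifold of dimension $\rho(m+n-\rho)$, and the sparse part needs $\kappa$ values plus their locations). Therefore, under a hard budget $K$, no scheme in this model class can simultaneously afford rank larger than order $K/(m+n)$ and support larger than order $K$; CAP's allocation saturates this trade-off frontier. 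I would phrase this as: the pair $(r', \|\tilde{\mathbf{S}}\|_0)$ produced by the top-$P_{\text{keep}}$ selection lies on the boundary $r'(m+n) + \|\tilde{\mathbf{S}}\|_0 = P_{\text{keep}}$, so it is Pareto-optimal among budget-feasible low-rank-plus-sparse configurations, and each coordinate attains its maximal possible order.

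The main obstacle I anticipate is not the arithmetic but making the phrase ``matches the optimal rates'' precise without overclaiming. The componentwise bounds $r' \le K/(m+n)$ and $\|\tilde{\mathbf{S}}\|_0 \le K$ are essentially tautological given the budget, so the lemma's content reduces to arguing that these bounds are tight in the right sense --- i.e. that the greedy score-based selection does not waste budget and that the resulting rank and sparsity are each as large as the budget permits, which requires assuming the retention probabilities $s_k$ are bounded away from the degenerate regime where, say, all mass goes to one component. I would handle this by adding a mild nondegeneracy hypothesis (the learned scores assign strictly positive weighted score to at least one singular value and one sparse entry, which holds generically after the policy-gradient updates) and then noting that the exact equality $r'(m+n) + \|\tilde{\mathbf{S}}\|_0 = P_{\text{keep}} = (1-r)P_{\text{total}} = \Theta(K)$ forces both terms to be $\Theta(K)$ unless one is suppressed, giving the stated rates. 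The comparison to ``optimal rates for low-rank + sparse representations'' then follows from the dimension-counting lower bound above, which I would cite rather than rederive.
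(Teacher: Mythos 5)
The paper provides no proof of this lemma at all; it is asserted in Appendix~\ref{app:theory} without an accompanying argument, so there is nothing to compare against line by line. Your budget-accounting derivation is the natural (and essentially the only) justification of the formal claim: since each retained singular value of $\tilde{\mathbf{L}}$ costs $(m+n)$ parameters and each retained entry of $\tilde{\mathbf{S}}$ costs one, the constraint $r'(m+n)+\|\tilde{\mathbf{S}}\|_0 \le P_{\text{keep}} \le K$ together with nonnegativity of each summand yields $r' \le K/(m+n)$ and $\|\tilde{\mathbf{S}}\|_0 \le K$ immediately. The one technicality you pass over is that the greedy top-$P_{\text{keep}}$ selection by cumulative parameter count can overshoot by up to $m+n-1$ if the last item taken is a singular value, but this slack is absorbed by the $\mathcal{O}$, so the bounds stand.

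You are also right to isolate the prose clause ``matches the optimal rates'' as the only content beyond bookkeeping. As written, the lemma makes only upper-bound claims, which are indeed tautological consequences of the budget; the dimension-counting lower bound and the nondegeneracy hypothesis you introduce (to rule out all budget being spent on one component) are genuinely extra material that the paper neither states nor proves. Your honesty that the componentwise bounds ``are essentially tautological given the budget'' and that ``matches the optimal rates'' would need to be made precise before it could be defended is an accurate assessment of what the lemma actually establishes versus what it gestures at.
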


\begin{corollary}[LLM Performance Preservation]
For a Transformer with $L$ layers, if each attention/MLP matrix satisfies Theorem~1 with $\|\tilde{\mathbf{W}}^{(l)}-\mathbf{W}^{(l)}\|_F \leq \epsilon$, then the full model satisfies:
\begin{equation}
    |\mathcal{L}(\tilde{\mathbf{W}})-\mathcal{L}(\mathbf{W})| \leq L\epsilon\sqrt{\mathrm{dim}(\mathbf{x})}
\end{equation}
where $\mathrm{dim}(\mathbf{x})$ is the input dimension.
\end{corollary}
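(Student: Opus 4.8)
The plan is to treat the compressed forward pass $f(\tilde{\mathbf{W}};\mathbf{x})$ as a composition of $L$ layer maps and to bound the output perturbation by a hybrid (telescoping) argument that swaps the weights of one layer at a time. Writing $\mathbf{W}=(\mathbf{W}^{(1)},\dots,\mathbf{W}^{(L)})$ and letting $\mathbf{W}_{\le\ell}$ denote the configuration that uses compressed weights on layers $1,\dots,\ell$ and original weights afterwards, one has the telescoping identity
\[
f(\tilde{\mathbf{W}};\mathbf{x}) - f(\mathbf{W};\mathbf{x}) = \sum_{\ell=1}^{L}\bigl(f(\mathbf{W}_{\le\ell};\mathbf{x}) - f(\mathbf{W}_{\le\ell-1};\mathbf{x})\bigr),
\]
so it suffices to bound each summand and then invoke Lipschitz continuity of the loss $\mathcal{L}$ in the network output.

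First I would establish that each Transformer block (attention sublayer and MLP sublayer, together with the residual connection and layer normalization) is $1$-Lipschitz in its input in the Euclidean norm — or at worst Lipschitz with a constant that can be folded into the data-dependent constants of Theorem~1. Layer normalization rescales activations to a fixed scale, the residual stream keeps each map close to the identity, and the softmax is $1$-Lipschitz; this is the standard route to a non-exploding bound. Second, for the $\ell$-th summand the only difference between $\mathbf{W}_{\le\ell}$ and $\mathbf{W}_{\le\ell-1}$ is that layer $\ell$'s matrix is replaced by its compressed version, so, conditioning on the (identical) activation $\mathbf{h}^{(\ell)}$ fed into that layer, the immediate perturbation is $(\tilde{\mathbf{W}}^{(\ell)}-\mathbf{W}^{(\ell)})\mathbf{h}^{(\ell)}$, whose norm is at most $\|\tilde{\mathbf{W}}^{(\ell)}-\mathbf{W}^{(\ell)}\|_2\,\|\mathbf{h}^{(\ell)}\|_2 \le \epsilon\,\|\mathbf{h}^{(\ell)}\|_2$, using $\|\cdot\|_2 \le \|\cdot\|_F$ and the hypothesis. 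The $1$-Lipschitz property of the remaining $L-\ell$ blocks transports this perturbation to the output without amplification, giving a per-summand bound of $\epsilon\,\|\mathbf{h}^{(\ell)}\|_2$.

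To produce the stated $\sqrt{\mathrm{dim}(\mathbf{x})}$ factor, I would use normalization of the activation stream: under the standing assumption that the input and the post-LayerNorm activations have coordinates of order one, $\|\mathbf{h}^{(\ell)}\|_2 \le \sqrt{\mathrm{dim}(\mathbf{x})}$ uniformly in $\ell$ (alternatively one absorbs the activation scale into the constant). Summing the $L$ summands yields $\|f(\tilde{\mathbf{W}};\mathbf{x})-f(\mathbf{W};\mathbf{x})\|_2 \le L\epsilon\sqrt{\mathrm{dim}(\mathbf{x})}$, and a final application of the $1$-Lipschitz loss (cross-entropy through softmax is $1$-Lipschitz in the logits, or the constant is absorbed) gives $|\mathcal{L}(\tilde{\mathbf{W}})-\mathcal{L}(\mathbf{W})| \le L\epsilon\sqrt{\mathrm{dim}(\mathbf{x})}$, after averaging over $\mathcal{D}$.

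The main obstacle is the Lipschitz-constant control: Transformer blocks are not literally $1$-Lipschitz in full generality — attention logits can have large gradients when keys and queries are large, and each MLP weight contributes its own operator norm — so the clean linear-in-$L$ bound requires either explicit normalization hypotheses (bounded activations, LayerNorm, residual dominance) or replacing the constant $1$ by per-layer Lipschitz constants $\kappa_\ell$ and reporting $\sum_{\ell}\epsilon\,\kappa_{\ell+1}\cdots\kappa_L\,\|\mathbf{h}^{(\ell)}\|_2$, which collapses to $L\epsilon\sqrt{\mathrm{dim}(\mathbf{x})}$ only when the $\kappa_\ell$ are (near) $1$. I would therefore state the normalization hypothesis explicitly and note that the data-dependent constants inherited from Theorem~1 are where any residual slack is parked.
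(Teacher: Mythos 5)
The paper provides no proof whatsoever for this corollary — it is stated bare in the appendix, so there is nothing of the paper's to compare your argument against. Your telescoping-over-layers decomposition is the natural route and is almost certainly the argument the authors had in mind. With that said, you have (honestly) flagged the two places where the proof genuinely breaks, and I will add a third that you missed.

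The Lipschitz gap is real and is not merely a matter of ``folding constants into $C,D$'': dot-product self-attention is not globally Lipschitz at all, and on a bounded domain each sublayer's constant is roughly $1+\kappa_{\mathrm{sub}}$ (residual) where $\kappa_{\mathrm{sub}}$ itself involves products of operator norms of the unperturbed weights. The per-summand bound is therefore amplified by $\prod_{j>\ell}(1+\kappa_j)$, which is exponential in depth in general and collapses to the linear-in-$L$ bound only under normalization assumptions the paper never states. The $\sqrt{\mathrm{dim}(\mathbf{x})}$ factor likewise has no derivation: even granting post-LayerNorm coordinates of order one, the quantity that appears is $\|\mathbf{h}^{(\ell)}\|_2$, which scales with the hidden dimension of layer $\ell$, not with $\mathrm{dim}(\mathbf{x})$; these are different numbers in a Transformer. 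Finally, a gap you did not call out: the hypothesis is ``each attention/MLP matrix satisfies $\|\tilde{\mathbf{W}}^{(l)}-\mathbf{W}^{(l)}\|_F\le\epsilon$,'' but a Transformer layer contains several such matrices ($\mathbf{W}_q,\mathbf{W}_k,\mathbf{W}_v,\mathbf{W}_o$, two or three MLP projections), so the telescoping sum runs over all matrices and yields a prefactor equal to the total matrix count, not $L$. As written the corollary does not follow from Theorem~1; your sketch is a faithful reconstruction of the intended heuristic, and your closing caveats identify where it would need to be shored up.
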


\section{Convergence Analysis of Bernoulli Policy Gradient}
\label{appendix:conv-analysis}

We analyse the stochastic optimisation that drives CAP’s second stage and 
show that it is \emph{(i)} unbiased, \emph{(ii)} has controllable variance, and 
\emph{(iii)} converges to a local optimum under standard Robbins–Monro 
conditions.

\paragraph{Unbiased gradient.}
For a scalar loss $L(\widetilde W)$ and Bernoulli mask vector $m\sim p(m\,|\,s)$,
the REINFORCE estimator~\citep{williams1992simple,baxter2001infinite} is
\begin{equation}
  g(s) \;=\; (L(\widetilde W)-\delta)\,\nabla_s \log p(m\,|\,s),
  \label{eq:pg-estimator}
\end{equation}
giving
\[
  \mathbb{E}_m\bigl[g(s)\bigr]
  \;=\;
  \nabla_s \mathbb{E}_{m}[L(\widetilde W)],
\]
so the estimate is unbiased.

\paragraph{Mask statistics.}
Each entry $m_{ij}\sim\mathrm{Bernoulli}(s_{ij})$ satisfies
$\mathbb{E}[m_{ij}]=s_{ij}$ and
$\mathrm{Var}(m_{ij}) = s_{ij}(1-s_{ij})$, maximised at $s_{ij}=0.5$.
We mitigate variance via:
\begin{itemize}
  \item a moving–average baseline $\delta$ in
        Eq.~\eqref{eq:pg-estimator}, which subtracts an estimate of
        $\mathbb{E}[L]$;
  \item mini‐batch averaging over $\mathcal{B}$ mask samples, reducing
        variance by $|\mathcal{B}|^{-1}$.
\end{itemize}
During training $s_{ij}\!\to\!0$ or $1$, so
$\mathrm{Var}(m_{ij})\!\to\!0$ and gradients become increasingly stable.

\paragraph{Convergence.}
With bounded second moment of $g(s)$, step sizes
$\eta_t$ satisfying $\sum_t \eta_t = \infty$ and
$\sum_t \eta_t^2 <\infty$, the Robbins–Monro theorem ensures almost‐sure
convergence to a stationary point~\citep{robbins1951stochastic}.  Empirically,
CAP converges within $\mathcal{O}(10^3)$ updates on a 128-sample
calibration set.

\section{Baseline Methods Description}
\label{sec:baselines}
This section provides detailed descriptions of all baseline methods used in our experimental evaluation. We categorize these methods into several groups based on their compression approaches.
\subsection{Unstructured Pruning Methods}
\paragraph{SparseGPT}~\citep{frantar2023sparsegpt} is a second-order pruning method specifically designed for large language models. It formulates pruning as a layer-wise reconstruction problem, using the inverse Hessian to determine optimal weight removal while minimizing the increase in layer-wise reconstruction error. The method processes weights in each layer sequentially and updates remaining weights to compensate for the removal of pruned parameters.
\paragraph{WANDA}~\citep{sun2024simpleeffectivepruningapproach} (Pruning by Weights AND Activations) is a simple yet effective pruning approach that estimates weight importance using both weight magnitudes and activation statistics. It computes importance scores by multiplying weight magnitudes with the norm of corresponding input activations, providing a more comprehensive measure of parameter significance than magnitude-only methods.
\paragraph{DSNoT}~\citep{DSNot} (Dual Sparse Network Training) applies structured sparsity patterns during training to achieve efficient inference. The method maintains dual sparse networks during training and applies knowledge distillation between dense and sparse models to preserve performance.
\paragraph{OATS}~\citep{zhangoats} (Optimal Allocation for Transformer Sparsity) performs optimal sparsity allocation across transformer layers using second-order information. It leverages layer-wise sensitivity analysis to determine the optimal distribution of sparsity across different layers, considering the varying importance of different transformer components.
\subsection{Layer-wise Allocation Methods}
\paragraph{OWL}~\citep{yin2023outlier} (Outlier-Aware Weight Layerwise) is a layer-wise allocation method that optimizes sparsity distribution across layers by identifying and preserving outlier weights that are critical for model performance. The method uses activation-based outlier detection to guide the sparsity allocation process.
\paragraph{AlphaPruning}~\citep{lu2024alphapruning} employs reinforcement learning to automatically determine the optimal sparsity ratio for each layer. It formulates the layer-wise sparsity allocation as a sequential decision-making problem and uses policy gradient methods to learn optimal allocation strategies.
\subsection{Joint Compression Methods}
\paragraph{SLiM}~\citep{mozaffari2024slim} (Sparsity-aware Low-rank compression with Importance Masking) combines low-rank approximation with sparsity and quantization. Its key innovation is probabilistic quantization error fitting, where low-rank decomposition is used to model and compensate for quantization errors. The method employs numerical integration to find optimal quantization parameters.
\paragraph{JSQ}~\citep{guo2024compressing} (Joint Sparsity and Quantization) optimizes sparsity and quantization parameters simultaneously through a unified optimization framework. It formulates the compression problem as a joint optimization over both sparsity masks and quantization levels, enabling better trade-offs between compression ratio and model performance.
\paragraph{L2QER}~\citep{zhang2024lqer} (Low-rank Quantization with Error Reduction) combines low-rank decomposition, quantization, and sparsity in a sequential manner. It first applies low-rank decomposition, then quantizes the resulting factors, and finally applies sparsity to further reduce model size while using error compensation techniques.
\paragraph{LPAF}~\citep{ren2023low} (Low-rank Plus Sparse with Adaptive Fine-tuning) follows a three-stage approach: (1) applies first-order unstructured pruning to obtain a sparse model, (2) uses sparsity-aware Singular Value Decomposition (SVD) to decompose the sparse matrices into low-rank form $\mathbf{A}\mathbf{B}$, and (3) performs mixed-rank fine-tuning to retrain the decomposed matrices while preserving the sparse structure.
\subsection{Knowledge Distillation Methods}
\subsubsection{Pre-training Distillation}
\paragraph{DistilBERT}~\citep{sanh2019distilbert} applies knowledge distillation during the pre-training phase to create a smaller model. It uses a combination of distillation loss, masked language modeling loss, and cosine embedding loss to train a student model that retains much of the teacher's capabilities with significantly fewer parameters.
\paragraph{TinyBERT}~\citep{jiao2019tinybert} extends knowledge distillation by transferring knowledge from both the intermediate layers and the prediction layer of the teacher model. It employs attention-based distillation and hidden state distillation to capture more comprehensive knowledge from the teacher model.
\subsubsection{Task-specific Distillation}
\paragraph{PKD}~\citep{sun2019patient} (Patient Knowledge Distillation) introduces a patient teacher mechanism where the student model learns from multiple intermediate teacher models of varying sizes. This progressive distillation approach helps bridge the capacity gap between large teachers and small students.
\paragraph{Theseus}~\citep{xu2020bert} employs a progressive module replacement strategy during fine-tuning. It gradually replaces modules in the original model with smaller counterparts while maintaining performance through careful scheduling and knowledge transfer.
\paragraph{CKD}~\citep{mirzadeh2020improved} (Cascade Knowledge Distillation) addresses the capacity gap problem in knowledge distillation by introducing intermediate teacher models. It uses a cascade of teacher models with gradually decreasing sizes to provide a smooth knowledge transfer path.
\paragraph{MetaDistill}~\citep{zhou2022metadistil} leverages meta-learning to automatically discover optimal distillation strategies. It learns to adapt distillation parameters and strategies based on the specific characteristics of the teacher-student pair and the target task.
\subsection{Structured Pruning Methods}
\paragraph{ISP}~\citep{mccarley2019pruning} (Iterative Structured Pruning) applies structured pruning in an iterative manner, removing entire structures (such as attention heads or feed-forward network dimensions) based on their importance scores. The method uses gradient-based importance measures and iterative refinement.
\paragraph{FLOP}~\citep{prasanna2020bert} focuses on reducing FLOPs (Floating Point Operations) by pruning entire dimensions in feed-forward networks and attention mechanisms. It uses activation-based importance scoring to determine which structures to remove while maintaining model expressiveness.
\paragraph{BPhybrid}~\citep{lagunas2021blockpruningfastertransformers} (Block-wise Pruning Hybrid) combines block-wise structured pruning with unstructured pruning techniques. It prunes at the granularity of transformer blocks while allowing fine-grained unstructured pruning within remaining blocks.
\paragraph{CoFi}~\citep{xia2022structured} (Coarse-to-Fine) applies a coarse-to-fine pruning strategy that first identifies important structures at a coarse granularity and then refines the pruning decisions at finer levels. It uses learnable masks to determine optimal structured pruning patterns.
\subsection{Matrix Factorization Methods}
\paragraph{SVD$_\text{ft}$}~\citep{wang2019structured} applies Singular Value Decomposition (SVD) to weight matrices followed by fine-tuning. It decomposes weight matrices into low-rank approximations and then fine-tunes the resulting factors to recover lost performance. The subscript "ft" indicates the inclusion of fine-tuning after decomposition.

\section{On the Non-Redundancy of L1 Penalization and Pruning}
\label{sec:l1_pruning_analysis}
Our two-stage design employs both L1 shrinkage in the RPCA decomposition and subsequent pruning operations. While these steps may appear related on the surface, they serve fundamentally different and non-redundant purposes. This section clarifies why attempting to replace our second-stage pruning with a higher $\lambda$ in the first stage is not feasible, as RPCA is a tool for decomposition, not a controllable mechanism for compression.
\subsection{Distinct Objectives of Two-Stage Design}
The core distinction lies in their objectives:
\paragraph{Stage 1 (RPCA Decomposition): Principled Separation.} The L1 penalty in the RPCA objective $\min \|L\|_* + \lambda\|S\|_1$ is designed to separate a weight matrix $W$ into a globally correlated, low-rank structure ($L$) and locally salient, sparse outliers ($S$). The $\lambda$ parameter governs the balance of this separation. Its purpose is to yield high-quality candidate pools for pruning, not to achieve a specific, final compression ratio.
\paragraph{Stage 2 (CAP Pruning): Budget-Constrained Selection.} This stage solves a different problem entirely: given the candidates from Stage 1, how do we select the optimal subset of singular vectors from $L$ and non-zero elements from $S$ to meet a strict, user-defined parameter budget $K$ while minimizing task performance degradation? This is a global, budget-aware optimization problem that $\lambda$ cannot address.
\subsection{Limitations of $\lambda$ as a Compression Parameter}
The compression ratio achieved by RPCA is an emergent property of the decomposition, not something that can be precisely controlled by tuning $\lambda$. Attempting to force a specific level of sparsity by adjusting $\lambda$ is problematic for several reasons:
\begin{itemize}
\item \textbf{Uncontrollable Trade-off:} $\lambda$ creates a complex, non-linear trade-off between the rank of $L$ and the sparsity of $S$. As you change $\lambda$ to affect $S$, it has a drastic and often unpredictable effect on $L$. There is no simple way to set $\lambda$ to achieve, for instance, a target of 50\% total parameters while maintaining a useful decomposition.
\item \textbf{Pathological Decompositions:} Extreme values of $\lambda$ destroy the quality of the separation, making the resulting components useless for effective compression.
\end{itemize}
\subsubsection{Experimental Analysis of $\lambda$ Parameter Effects}
To demonstrate this limitation, we analyze the output of the RPCA decomposition under different $\lambda$ values. We use the theoretically motivated default $\lambda = 1 / \sqrt{\max(m, n)}$ from the original RPCA work~\citep{wright2009robust} as our baseline, which provides a robust, balanced starting point without requiring parameter tuning.
\begin{table}[h]
\centering
\caption{Impact of $\lambda$ parameter on RPCA decomposition characteristics}
\label{tab:lambda_analysis}
\begin{tabular}{lccc}
\toprule
$\lambda$ setting & Sparsity of $S$ & Rank of $L$ & Decomposition Quality \\
\midrule
8e-5 (Low) & 0.001 (Dense) & 1512 & Poor separation: $S$ lacks sparsity \\
Default & 0.41 & 2109 & Balanced decomposition \\
8e-3 & 0.53 & 2174 & Alternative reasonable trade-off \\
0.8 (High) & 0.999 (Sparse) & 3188 & Poor separation: $L$ becomes high-rank \\
\bottomrule
\end{tabular}
\end{table}
\subsubsection{Analysis of Results}
The experimental results reveal several key insights:
\begin{itemize}
\item \textbf{Optimal Parameter Range:} The theoretically motivated default $\lambda$ value provides a balanced decomposition where both $L$ and $S$ are meaningful, creating a rich candidate pool for subsequent pruning.
\item \textbf{Non-Linear Parameter Effects:} The relationship between $\lambda$ and compression outcomes is complex and non-linear. Extreme $\lambda$ values (0.8) create highly sparse $S$ but force $L$ to become high-rank, essentially reducing to $L \approx W$. Conversely, very low $\lambda$ values (8e-5) fail to enforce meaningful sparsity in $S$.
\item \textbf{Controllability Limitations:} No single $\lambda$ value can simultaneously achieve low-rank $L$, sparse $S$, and satisfy a predefined parameter budget constraint.
\end{itemize}
\subsection{Conclusion}
The two-stage design is essential rather than redundant. While Stage 1 (RPCA) provides a principled decomposition, it offers limited controllability over the final compression ratio. Stage 2 (CAP) addresses this limitation by performing intelligent, data-driven selection from the candidate pools generated in Stage 1, enabling precise parameter budget control while maintaining task performance. This combination of principled decomposition followed by budget-aware selection is fundamental to the superior performance of our approach.

\section{Performance Evaluation on LLaMA and LLaMA-2 Models}
\label{sec:llama12_results}

\begin{table*}[t]
    \centering
    \caption{Perplexity (PPL) and mean zero-shot accuracy for pruned LLaMA and LLaMA-2 models at 50\% compression ratio.}
    \label{tab:results}
    \resizebox{\textwidth}{!}{%
    \begin{tabular}{>{\centering\arraybackslash}m{2.5cm}cc|cc|cc|cc|>{\columncolor{skyblue}}c>{\columncolor{skyblue}}c}
        \toprule
        \multirow{2}{*}{\textbf{Model}} & \multicolumn{2}{c}{\textbf{w/o Pruning}} & \multicolumn{2}{c}{\textbf{SparseGPT}} & \multicolumn{2}{c}{\textbf{WANDA}} & \multicolumn{2}{c}{\textbf{BESA}} & \multicolumn{2}{c}{\textbf{CAP}} \\
        \cmidrule(lr){2-3} \cmidrule(lr){4-5} \cmidrule(lr){6-7} \cmidrule(lr){8-9} \cmidrule(lr){10-11}
         & \textbf{PPL $\downarrow$} & \textbf{Zero-Shot $\uparrow$} & \textbf{PPL $\downarrow$} & \textbf{Zero-Shot $\uparrow$} & \textbf{PPL $\downarrow$} & \textbf{Zero-Shot $\uparrow$} & \textbf{PPL $\downarrow$} & \textbf{Zero-Shot $\uparrow$} & \textbf{PPL $\downarrow$} & \textbf{Zero-Shot $\uparrow$} \\
        \midrule
        LLaMA-7B     & 5.68& 66.31& 7.22& 63.12& 7.26& 61.81& 6.86& 63.13& \textbf{6.61} & \textbf{64.29} \\
        LLaMA-13B    & 5.09& 68.91& 6.21& 65.98& 6.15& 66.49& 5.92& 67.43& \textbf{5.76} & \textbf{68.32} \\
        LLaMA-30B    & 4.10& 72.73& 5.33& 70.53& 5.25& 70.92& 5.00& 71.61& \textbf{4.77} & \textbf{72.08} \\
        LLaMA-2 7B   & 5.21& 66.96& 6.99& 63.71& 6.92& 63.81& 6.60& 64.92& \textbf{6.25} & \textbf{65.33} \\
        LLaMA-2 13B  & 4.88& 69.95& 6.02& 67.22& 5.97& 67.94& 5.75& 68.45& \textbf{5.49} & \textbf{69.14} \\
        \bottomrule
    \end{tabular}%
    }
\end{table*}

This section presents comprehensive experimental results on the LLaMA and LLaMA-2 model families under different compression settings. We evaluate our proposed CAP method against several state-of-the-art pruning baselines to demonstrate its effectiveness across various model sizes and compression ratios.
\subsection{Results at 50\% Compression Ratio}
Table~\ref{tab:results} presents the perplexity and mean zero-shot accuracy results for LLaMA and LLaMA-2 models at 50\% compression ratio. Our method consistently outperforms existing pruning approaches across all model variants, demonstrating superior performance in both language modeling (lower perplexity) and downstream task performance (higher zero-shot accuracy).

The results demonstrate that CAP achieves consistently superior performance across all tested models. Notably, CAP maintains competitive zero-shot accuracy while achieving significantly lower perplexity compared to other pruning methods. For instance, on LLaMA-7B, CAP achieves a perplexity of 6.61 compared to 7.22 for SparseGPT and 7.26 for WANDA, while simultaneously maintaining higher zero-shot accuracy (64.29\% vs. 63.12\% and 61.81\% respectively).
\subsection{Evaluation Under Higher Compression Ratios}

\begin{table*}[ht]
\caption{Accuracy results for different pruning methods on LLaMA-7B with 75\% parameter retention. CAP represents our proposed method, applied without any parameter adjustment after pruning. w/o Pruning shows the baseline performance of the unpruned model.}
\centering
\setlength{\tabcolsep}{6pt} 
\renewcommand{\arraystretch}{1.4} 
\begin{tabular}{lccccccc}
\toprule
\textbf{Methods} & \textbf{BoolQ} & \textbf{RTE} & \textbf{HellaSwag} & \textbf{Winogrande} & \textbf{ARC-e} & \textbf{ARC-c} & \textbf{OBQA} \\ 
\midrule
w/o Pruning      & \textbf{75.08} & \textbf{66.09} & \textbf{56.94}     & \textbf{69.93}      & \textbf{75.25} & \textbf{41.89} & \textbf{34.60} \\ 
Magnitude        & 42.23          & 52.35          & 25.86             & 48.38              & 26.64          & 21.50          & 14.00          \\ 
SparseGPT        & 60.86          & 52.71          & 29.10             & 51.78              & 33.08          & 17.58          & 13.40          \\ 
WANDA            & 37.83          & 51.99          & 26.78             & 49.41              & 28.79          & 19.54          & 13.20          \\ 
CAP (75\%)       & \textbf{63.33} & \textbf{55.71} & \textbf{31.67}     & \textbf{54.01}      & \textbf{35.73} & \textbf{22.88} & \textbf{16.12} \\ 
\bottomrule
\end{tabular}
\label{tab:llama7b_75}
\end{table*}

\begin{table*}[ht]
\caption{Accuracy results for different pruning methods on LLaMA2-7B with 75\% parameter retention. CAP represents our proposed method, applied without any parameter adjustment after pruning. w/o Pruning shows the baseline performance of the unpruned model.}
\centering
\setlength{\tabcolsep}{6pt} 
\renewcommand{\arraystretch}{1.4} 
\begin{tabular}{lccccccc}
\toprule
\textbf{Methods} & \textbf{BoolQ} & \textbf{RTE} & \textbf{HellaSwag} & \textbf{Winogrande} & \textbf{ARC-e} & \textbf{ARC-c} & \textbf{OBQA} \\ 
\midrule
w/o Pruning      & \textbf{77.71} & \textbf{62.82} & \textbf{57.14}     & \textbf{68.90}      & \textbf{76.39} & \textbf{43.52} & \textbf{31.40} \\ 
Magnitude        & 43.49          & 49.10          & 25.69             & 50.83              & 26.09          & 20.73          & 16.00          \\ 
SparseGPT        & 60.34          & 54.15          & 30.28             & 53.12              & 33.75          & 20.39          & 14.20          \\ 
WANDA            & 38.22          & 52.17          & 26.95             & 50.51              & 28.03          & 19.62          & 13.20          \\ 
CAP (75\%)       & \textbf{62.67} & \textbf{56.15} & \textbf{32.11}     & \textbf{55.39}      & \textbf{36.12} & \textbf{23.02} & \textbf{16.37} \\ 
\bottomrule
\end{tabular}
\label{tab:llama2_75}
\end{table*}

We further evaluate our proposed method (CAP) against three other pruning methods—Magnitude, SparseGPT, and WANDA—on LLaMA-7B and LLaMA-2 7B models under more aggressive compression settings with a parameter retention rate of 75\% (25\% compression). The results are presented in Tables~\ref{tab:llama7b_75} and~\ref{tab:llama2_75}.
\textbf{Key Observations:}
\begin{itemize}
    \item \textbf{CAP consistently outperforms other methods:} CAP achieves the highest accuracy across all datasets without requiring any parameter adjustment after pruning. This demonstrates its robustness in retaining critical model performance even under high sparsity conditions.
    
    \item \textbf{Magnitude and SparseGPT limitations:} These methods show noticeable performance degradation under 75\% parameter retention, especially on tasks that require factual reasoning (e.g., OBQA) or commonsense understanding (e.g., HellaSwag). This highlights the importance of principled parameter selection rather than simple magnitude-based approaches.
    
    \item \textbf{WANDA performance:} WANDA performs slightly better than SparseGPT on certain datasets but is generally less competitive compared to CAP, highlighting the advantages of CAP's probabilistic pruning mechanism over activation-based importance scoring alone.
    
    \item \textbf{CAP excels in challenging settings:} By leveraging RPCA for decomposition and policy gradient optimization for adaptive pruning, CAP is able to selectively retain the most informative parameters, ensuring superior performance even under extreme sparsity conditions. The method's ability to jointly optimize low-rank and sparse components provides a more nuanced approach to parameter importance estimation.
\end{itemize}
\textbf{Conclusion:} The comprehensive evaluation on both LLaMA and LLaMA-2 families demonstrates that CAP offers a robust and efficient approach to model compression across different compression ratios. By eliminating heuristic thresholds and adopting fine-grained pruning strategies based on principled matrix decomposition, CAP surpasses existing methods while maintaining computational simplicity and avoiding the need for post-pruning fine-tuning. The consistent performance gains across model sizes and compression settings validate the effectiveness of the RPCA-based joint optimization approach.

\section{Empirical Throughput and Resource Consumption Analysis}
\label{app:throughput}

We revaluated the inference efficiency and resource consumption of our method. We conducted these new tests on the LLaMA-3.1-8B model. All measurements used a single NVIDIA A100-80G GPU. We compared CAP directly against Wanda to demonstrate the impact of our component-based structure.

\begin{table}[htbp]
    \centering
    \caption{Inference Efficiency Comparison on LLaMA-3.1-8B (Batch Size=1, Sequence Length=1024).}
    \label{tab:efficiency-results}
    \resizebox{\linewidth}{!}{
    \begin{tabular}{lccccc}
        \toprule
        Method & Component Structure & S-Matrix Sparsity & Latency (s) $\downarrow$ & Throughput (tok/s) $\uparrow$ & Peak Memory (GB) \\
        \midrule
        Wanda & Single Sparse Matrix & 50\% & 6.28 & 163.4 & 15.18 \\
        \textbf{CAP (Ours)} & \textbf{Low-Rank + Sparse} & \textbf{75\%$\sim$90\%} & \textbf{5.80} & \textbf{176.5} & \textbf{14.90} \\
        \bottomrule
    \end{tabular}
    }
\end{table}

Our results in Table \ref{tab:efficiency-results} show a clear performance advantage for CAP. We summarize the key insights below:

\paragraph{Inference Speed and Throughput}
CAP achieves a higher throughput of \textbf{176.5 tokens/second}. This is faster than Wanda, which reaches 163.4 tokens/second. CAP also maintains a lower latency of 5.80 seconds compared to Wanda's 6.28 seconds. This result challenges the common assumption that adding components (Low-Rank + Sparse) automatically slows down inference.

\paragraph{The Impact of Extreme Sparsity}
The speed advantage comes from the structure of the weight matrices. Wanda produces a uniform 50\% sparse matrix. Standard sparse acceleration tools (like \texttt{torch.sparse}) often struggle at this level. A 50\% density is still too "heavy" for these kernels to outperform dense matrix multiplication. The overhead of managing sparsity often cancels out the speed benefits.

\paragraph{Why CAP is More Efficient}
CAP decomposes the original weight $W$ into a Low-Rank matrix $L$ and a Sparse matrix $S$. The $L$ component handles the bulk of the energy. This allows the $S$ component to be extremely empty. The $S$ matrix in CAP typically exhibits \textbf{75\% to 90\% sparsity}. This high sparsity falls into the efficient zone for Sparse Matrix Multiplication (SpMM). The hardware can process these highly sparse matrices much faster. This specific structural characteristic allows CAP to beat the baseline despite having a multi-component forward pass.

\paragraph{Memory Footprint}
Our method also remains efficient in terms of memory. CAP operates with a peak GPU memory usage of \textbf{14.90 GB}, which is slightly lower than Wanda's 15.18 GB. This confirms that decomposing the weights does not impose a memory penalty. The approach is well-suited for environments with strict VRAM constraints.

\section{Detailed Ablation Studies}\label{appendix:ablation}

This section provides comprehensive ablation studies to understand the behavior and characteristics of our compression method. We focus on two key aspects: the distribution of retained ranks across different modules and the stability analysis through sequential layer-wise pruning.

\subsection{Low-Rank Component Rank Distribution}

We analyzed the rank distribution of low-rank components in each module after probabilistic pruning. Figure~\ref{fig:rank_distribution} shows these ranks along with the layer-wise averages. Modules with higher ranks in the low-rank component tend to have sparser counterparts in the sparse component to meet the compression target. The general trend of increasing rank in deeper layers suggests that redundancy varies across the network, with later layers capturing more complex representations, underscoring the need for careful pruning in these layers. The \texttt{v\_proj} and \texttt{o\_proj} matrices often have lower ranks, likely because they primarily handle content transmission (\texttt{v\_proj}) and output integration (\texttt{o\_proj}) in the attention mechanism, focusing on essential information without complex transformations.

\subsection{Perplexity Changes During Layer-wise Pruning}

To evaluate the stability of our compression approach, we sequentially decomposed and pruned each layer of the LLaMA2-7B model, starting from the initial layer. The resulting performance changes are shown in Figure~\ref{fig:perplexity}. The performance degradation generally exhibits a linear correlation with the number of pruned layers, demonstrating the stability of our method. This linear trend indicates that our compression strategy does not introduce catastrophic failure points and maintains predictable performance reduction. Interestingly, an unexpected performance boost occurs when the last layer is compressed, underscoring the importance of maintaining structural consistency within the model. Furthermore, decomposing and pruning the first layer led to a slight improvement (ppl 5.18) over the original model's performance (ppl 5.21), suggesting that early layers may indeed contain some redundant parameters that can be removed without harming performance.

\begin{figure}[t]
\centering
\label{fig:distribution_ppl_appendix}
\begin{subfigure}[b]{0.55\textwidth}
    \includegraphics[width=\textwidth]{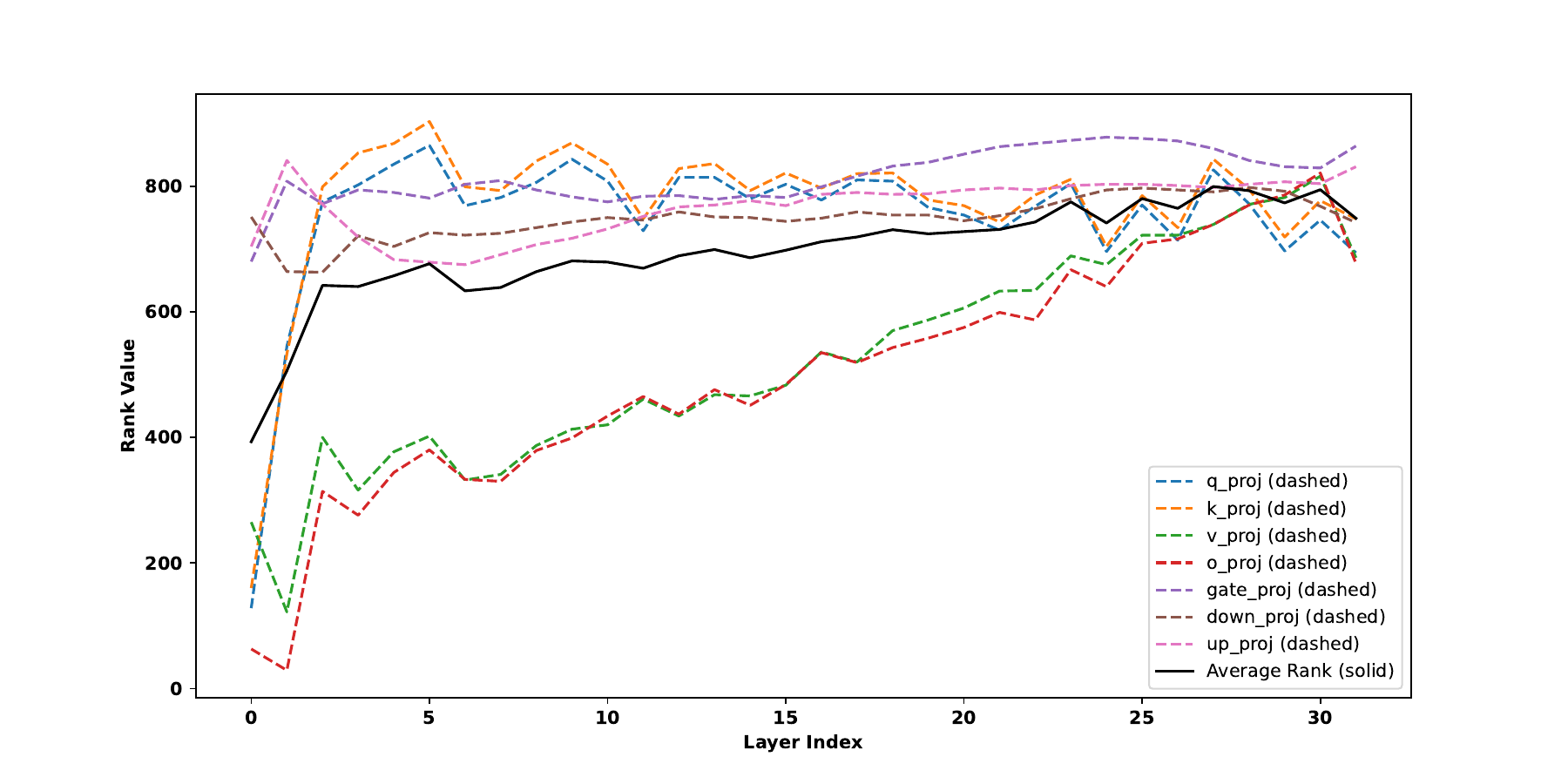}
    \caption{The rank distribution of low-rank matrices for each module after pruning.}
    \label{fig:rank_distribution}
\end{subfigure}
\hspace{15pt}
\begin{subfigure}[b]{0.35\textwidth}
    \includegraphics[width=\textwidth]{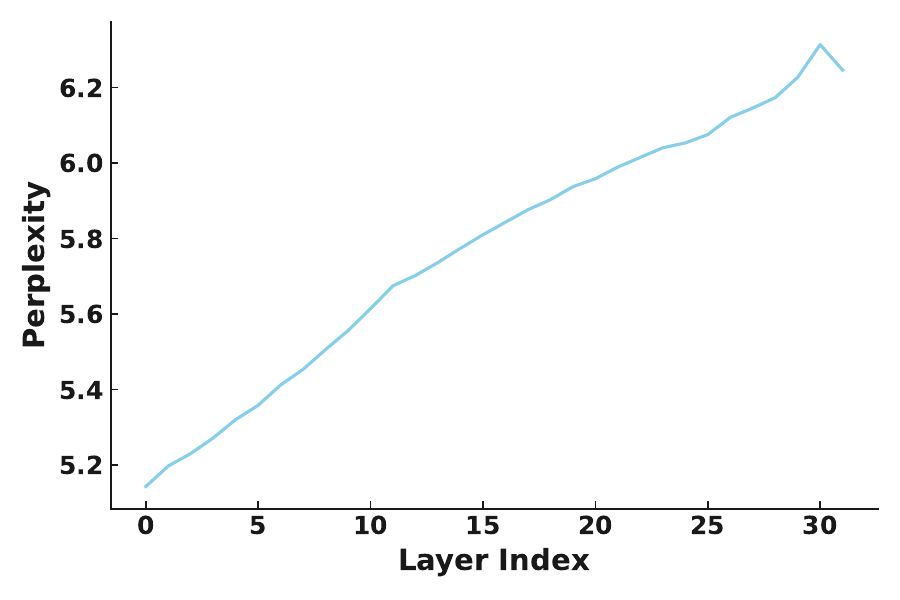}
    \caption{Perplexity changes with sequential decomposition and pruning of each layer.}
    \label{fig:perplexity}
\end{subfigure}
\caption{Analysis of module-specific redundancy and pruning stability in LLaMA2-7B. (a) The retained rank varies significantly across different modules, indicating differing sensitivity to compression. (b) Performance degradation shows a linear correlation with the number of pruned layers, demonstrating the stability of our compression approach.}
\end{figure}

\subsection{Summary of Ablation Findings}

Our ablation studies reveal several important insights:
\begin{itemize}
    \item \textbf{Module heterogeneity:} Different modules exhibit varying levels of redundancy, with attention projection matrices (\texttt{v\_proj}, \texttt{o\_proj}) typically requiring lower ranks than other components.
    \item \textbf{Layer-wise redundancy patterns:} Early layers contain more redundant parameters that can be safely removed, while deeper layers require more careful compression to maintain performance.
    \item \textbf{Compression stability:} The linear relationship between performance degradation and the number of compressed layers demonstrates the predictable and stable nature of our compression approach.
    \item \textbf{Structural consistency:} The performance boost observed when compressing the final layer highlights the importance of maintaining model structural integrity throughout the compression process.
\end{itemize}

\section{Limitations}
\label{limitations}
Similar to other unstructured sparsity methods, the acceleration of sparse matrix computations heavily depends on specialized hardware support. While significant advances have been made in sparse computation frameworks, the lack of universal hardware optimization can hinder the practical deployment of our method in certain environments.

\end{document}